\DeclareMathAlphabet\EuRoman{U}{eur}{m}{n}
\SetMathAlphabet\EuRoman{bold}{U}{eur}{b}{n}
\crefname{lemma}{Lemma}{Lemmas}
\crefname{corollary}{Corollary}{Corollaries}
\crefname{theorem}{Theorem}{Theorems}
\let\reftagform@=\tagform@
\def\tagform@#1{\maketag@@@{\ignorespaces\textcolor{gray}{(#1)}\unskip\@@italiccorr}}
\renewcommand{\eqref}[1]{\textup{\reftagform@{\ref{#1}}}}
\newcommand{\LATER}[1]{\error}
\newcommand{\fLATER}[1]{\error}
\newcommand{\TBD}[1]{\error}
\newcommand{\fTBD}[1]{}
\newcommand{\PROBLEM}[1]{\error}
\newcommand{\fPROBLEM}[1]{\error}
\declaretheorem[style=plain,numberwithin=section,name=Theorem]{theorem}
\declaretheorem[style=plain,sibling=theorem,name=Lemma]{lemma}
\declaretheorem[style=remark,qed=$\triangleleft$,sibling=theorem,name=Remark]{remark}
\numberwithin{theorem}{section}
\def\[#1\]{\begin{align}#1\end{align}}
\def\*[#1\]{\begin{align*}#1\end{align*}}
\def\clap#1{\hbox to 0pt{\hss#1\hss}}
\def\mathclap{\mathpalette\mathclapinternal}
\def\mathclapinternal#1#2{%
\clap{$\mathsurround=0pt#1{#2}$}}
\newcommand{\defas}{\vcentcolon=}  %
\newcommand{\dist}{\ \sim\ }
\newcommand{\Lmax}{L_{\text{max}}}
\newcommand{\Reals}{\mathbb{R}}
\newcommand{\Nats}{\mathbb{N}}
\newcommand{\NNReals}{\Reals_+}
\newcommand{\grad}{\nabla}
\newcommand{\dee}{\mathrm{d}}
\DeclareMathOperator*{\newlim}{\mathrm{lim}\vphantom{\mathrm{infsup}}}
\DeclareMathOperator*{\newmax}{\mathrm{max}\vphantom{\mathrm{infsup}}}
\DeclareMathOperator*{\newinf}{\mathrm{inf}\vphantom{\mathrm{infsup}}}
\DeclareMathOperator*{\newsup}{\mathrm{sup}\vphantom{\mathrm{infsup}}}
\renewcommand{\lim}{\newlim}
\renewcommand{\max}{\newmax}
\renewcommand{\inf}{\newinf}
\renewcommand{\sup}{\newsup}
\newcommand{\ProbMeasures}[1]{\mathcal{M}_1(#1)}
\renewcommand{\Pr}{\mathbb{P}}
\def\EE{\mathbb{E}}
\newcommand{\defn}[1]{\emph{#1}}
\newcommand{\event}[1]{\left \lbrace #1 \right \rbrace}
\newcommand{\KLname}{\mathrm{KL}}
\newcommand{\HH}{\mathcal H}
\newcommand{\KL}[2]{\KLname(#1||#2)}
\newcommand{\KLbin}[2]{\KLname(#1||#2)}
\newcommand{\Normal}{\mathcal N}
\newcommand{\Bernoulli}[1]{\mathcal B_{#1}}
\newcommand{\XX}{\Reals^{\idim}}
\newcommand{\YY}{K}
\newcommand{\loss}{\ell}
\newcommand{\lossbce}{\ell_{\text{BCE}}}
\newcommand{\idim}{k}
\newcommand{\e}{\mathrm{e}}
\newcommand{\randto}{\rightsquigarrow}
\newcommand{\ww}{\mathbf{w}}
\renewcommand{\HH}{\Reals^p}
\newcommand{\RHH}{\ProbMeasures{\HH}}
\newcommand{\Dist}{\mathcal D}
\newcommand{\Alg}{\mathscr A}
\newcommand{\PAlg}{\mathscr P}
\renewcommand{\event}[1]{\bigl ( #1 \bigr )}
\newcommand{\Stest}{S_{\mathrm{tst}}}
\newcommand{\EEE}[1]{\underset{#1}{\EE}}
\newcommand{\set}[2][]{#1\{#2 #1\}}
\newcommand{\brackets}[2][]{#1[#2 #1]}
\newcommand{\parens}[2][]{#1(#2 #1)}
 \renewcommand{\event}[2][]{#1(#2 #1)}
\renewcommand{\defas}{\overset{\text{\smash{\tiny{def}}}}{=}}
\newcommand\optparen[1]{\ifthenelse{\equal{#1}{}}{}{(#1)}}
\newcommand{\RiskChar}{R}
\newcommand{\Risk}[2]{\RiskChar_{#1}\optparen{#2}}
\newcommand{\EmpRisk}[2]{\hat \RiskChar_{#1}\optparen{#2}}
\newcommand{\SurEmpRisk}[2]{\tilde \RiskChar_{#1}\optparen{#2}}
\newcommand{\ESGLD}{Entropy-SGLD}
\newcommand{\GibbsP}{local entropy}
\newcommand{\binloss}{\raisebox{1pt}{\scalebox{0.75}{\textrm{\normalfont\scriptsize \,0--1\!}}}}
\newcommand{\Err}[2]{\RiskChar^{\binloss}_{#1}\optparen{#2}}
\newcommand{\ErrRamp}[2]{\RiskChar^{r}_{#1}\optparen{#2}}
\newcommand{\EmpErr}[2]{\RiskChar^{\binloss\!}_{#1}\optparen{#2}}
\newcommand{\wdp}{\ww^{\ast}}
\newcommand{\Pw}[1]{P_{#1}}
\newcommand{\Qw}[1]{Q_{#1}^{\,\smash{S}}}%
\icmltitlerunning{Entropy-SGD optimizes the prior of a PAC-Bayes bound}
\begin{document}

\twocolumn[
\icmltitle{
Entropy-SGD optimizes the prior of a PAC-Bayes bound:\\
Generalization properties of Entropy-SGD and data-dependent priors
}

\icmlsetsymbol{equal}{*}

\renewcommand{\thefootnote}{\fnsymbol{footnote}}

\begin{icmlauthorlist}
\icmlauthor{Gintare Karolina Dziugaite}{cam,vec}
\icmlauthor{Daniel M.~Roy}{to,vec}
\end{icmlauthorlist}

\icmlaffiliation{cam}{Dept.\ of Engineering, Univ.\ of Cambridge, Cambridge, UK}
\icmlaffiliation{to}{Dept.\ of Statistical Sciences, Univ.\ of Toronto, Toronto, Canada}
\icmlaffiliation{vec}{Vector Institute, Toronto, Canada}

\icmlcorrespondingauthor{Gintare Karolina Dziugaite}{gkd22@cam.ac.uk}
\icmlcorrespondingauthor{Daniel M.~Roy}{droy@utstat.toronto.edu}

\icmlkeywords{}

\vskip 0.3in
]

\printAffiliationsAndNotice{}  %
\begin{abstract}
We show that Entropy-SGD \citep{CCSL16},
when viewed as a learning algorithm, 
optimizes a PAC-Bayes bound on the risk of a Gibbs (posterior) classifier, i.e.,
a randomized classifier obtained by a risk-sensitive perturbation of the weights of a learned classifier.
Entropy-SGD works by optimizing the bound's prior,
violating the hypothesis of the PAC-Bayes theorem that the prior is chosen independently of the data. 
Indeed, available implementations of Entropy-SGD 
rapidly obtain zero training error on random labels and the same holds of the Gibbs posterior.
In order to obtain a valid generalization bound, 
we rely on a result showing that 
data-dependent priors obtained by 
stochastic gradient Langevin dynamics (SGLD)
yield valid PAC-Bayes bounds provided the target distribution of SGLD is $\epsilon$-differentially private.
We observe that test error on MNIST and CIFAR10 %
falls within the (empirically nonvacuous) risk bounds 
computed under the assumption that SGLD reaches stationarity.
In particular, 
Entropy-SGLD can be configured to yield relatively tight generalization bounds and still fit real labels,
although these same settings do not obtain state-of-the-art performance.
\end{abstract}

\newcommand{\Epochs}{Epochs $\div$ L}

\newcommand{\LGibbs}{G_{\gamma,\tau}^{\ww,\smash{S}}}
\newcommand{\LGibbsDen}{g^{\ww,\smash{S}}_{\gamma,\tau}}

\newcommand{\LEDist}[1]{P_{\exp \parens {#1 \, F_{\gamma,\tau}(\cdot; S) }}}

\section{Introduction}

Optimization is central to much of machine learning, but generalization is the ultimate goal.
Despite this, the generalization properties of many optimization-based learning algorithms 
are poorly understood.
The standard example is stochastic gradient descent (SGD), one of the workhorses of deep learning, 
which has good generalization performance
in many settings, even under overparametrization \citep{Ney1412},
but rapidly overfits in others \citep{Rethinking17}.
Can we develop high performance learning algorithms with provably strong generalization guarantees? 
Or is there a limit?

In this work, we study an optimization algorithm called Entropy-SGD \citep{CCSL16},
which was designed to outperform SGD in terms of generalization error when optimizing an empirical risk. 
Entropy-SGD minimizes an objective $f : \HH \to \Reals$
indirectly by approximating stochastic gradient ascent on 
the so-called local entropy 
\begin{equation*}
F(\ww) 
\defas C(\tau) + \log \,\underbrace{\EE_{\xi \sim \Normal_\tau} \brackets{e^{-\tau\, f(\ww+\xi)}}}_{\mathclap{\smash{\int \exp(-f(\ww+x)) \Normal_\tau\!(\dee x)}}}
\end{equation*}
where $\tau > 0$ is an inverse temperature, $C(\tau)$ is an additive constant, and $\Normal_\tau$ denotes a zero-mean isotropic multivariate normal distribution on $\HH$ whose scale depends on $\tau$.

Our first contribution is connecting Entropy-SGD to results in statistical learning theory,
showing that maximizing the local entropy corresponds to minimizing a PAC-Bayes bound \citep{PACBayes} 
on the risk of the so-called Gibbs posterior.
The distribution of $\ww + \xi$ is the PAC-Bayesian ``prior'', 
and so optimizing the local entropy optimizes the bound's prior.
This connection between local entropy and PAC-Bayes 
follows from a result due to \citet[][Lem.~1.1.3]{Catoni} in the case of bounded risk. (See \cref{optprior}.)
In the special case where $\tau f$ is the empirical cross entropy, 
the local entropy is literally a Bayesian log marginal density. 
The connection between minimizing PAC-Bayes bounds under log loss and maximizing log marginal densities 
is the subject of recent work by \citet{germain2016pac}. Similar connections have been made by \citet{zhang2006,zhang2006information,grunwald2012safe,GrunwaldM16}.

Despite the connection to PAC-Bayes,
as well as theoretical results by \citeauthor{CCSL16} suggesting that Entropy-SGD may be more stable than SGD,
we demonstrate that Entropy-SGD (and its corresponding Gibbs posterior) can rapidly overfit, just like SGD. 
We identify two changes, motivated by theoretical analysis, that %
prevent overfitting.
The first change relates to the stability 
of the optimized prior mean, with respect to changes to the data.
The PAC-Bayes theorem requires that the prior be independent of the data,
and so by optimizing the prior mean, Entropy-SGD invalidates the bound.
Indeed, the bound does not hold empirically. 
While a PAC-Bayes prior may not be chosen based on the data, %
it can depend on the data distribution.  
This suggests that if the prior depends only weakly on the data, 
it may be possible to derive a valid bound and control overfitting.

Indeed, \citet{DR18private} recently formalized this idea using 
differential privacy \citep{Dwork2006,dwork2015preserving}
under the assumption of bounded risk.
Using existing results connecting statistical validity and differential privacy \citep[Thm.~11]{dwork2015preserving},
they show that an $\epsilon$-differentially private prior yields a valid, 
though looser, PAC-Bayes bound. 

Achieving strong differential privacy can be computationally intractable.
Motivated by this obstruction, \citeauthor{DR18private} relax the privacy requirement in the case of Gaussian PAC-Bayes priors parameterized by their mean vector.
They show that convergence in distribution to a differentially private mechanism suffices for generalization.
This allows one to use  
stochastic gradient Langevin dynamics \citep[SGLD;][]{welling2011bayesian},
which is known to converge weakly to its target distribution, under regularity conditions.
We will refer to the Entropy-SGD algorithm as \ESGLD\ when the SGD step on local entropy is replaced by SGLD.

The one hurdle to using data-dependent priors learned by SGLD is that we cannot easily measure how close
we are to converging. Rather than abandoning this approach, we take two steps: 
First, we run SGLD far beyond the point where it appears to have converged.
Second, we assume convergence, but then view/interpret the bounds as being optimistic. 
In effect, these two steps allow us to see the potential and limitations of using private data-dependent priors to study \ESGLD.

Empirically, we find that the resulting PAC-Bayes bounds are quite tight but still conservative.
On MNIST, when the limiting privacy of \ESGLD{} is tuned to contribute no more than $ 2 \epsilon^2 \times 100 \approx 0.2\%$ to the generalization error,
the test-set error of the learned network is 3--8\%, which is roughly 5--10 times higher than state-of-the-art test-set error, which for MNIST is between 0.2-1\%.\footnote{%
These numbers must be interpreted carefully---the simple fact that the deep-learning tool chain was developed using MNIST likely implies that generalization and test set bounds are biased.} 

The second change pertains to the stability of the stochastic gradient estimate made on each iteration of Entropy-SGD. 
This estimate is made using SGLD.  
(Hence Entropy-SGD is SGLD within SGD.)
\citeauthor{CCSL16} make a subtle but critical modification to the noise term in the SGLD update: 
the noise is divided by a factor that ranges from $10^3$ to $10^4$.
(This factor was ostensibly tuned to produce good empirical results.)
Our analysis shows that, as a result of this modification, 
the Lipschitz constant of the objective function is approximately $10^6$--$10^8$ times larger, 
and the conclusion that the Entropy-SGD objective is smoother than the original risk surface no longer stands.
This change to the noise also negatively impacts the differential privacy of the prior mean. 
Working backwards from the desire to obtain tight generalization bounds, 
we are led to divide the SGLD noise by a factor of only $\sqrt[4]{m}$, where $m$ is the number of data points. (For MNIST, $\sqrt[4]{m} \approx 16$.) 
The resulting bounds are nonvacuous and tighter than those recently published by \citet{DR17}, 
although it must be emphasized that 
the bounds are optimistic because we assume SGLD has converged. The extent to which it has not converged may
inflate the bound.

We begin by introducing sufficient background so that we can make a formal connection  
between local entropy and PAC-Bayes bounds. We discuss additional related work in \cref{sec:rel}.
We then introduce several existing learning bounds that use differential privacy,
including the PAC-Bayes bounds outlined above that use data-dependent priors.
In \cref{sec:eval}, we present experiments on MNIST and CIFAR10, which provide evidence for our theoretical analysis. 
We close with a short discussion.

\section{Preliminaries: Supervised learning, Entropy-SGD, and PAC-Bayes}

We consider the batch supervised learning setting, where we are given a sample $z_1,\dots,z_m$ 
drawn i.i.d.\ from an unknown probability distribution $\Dist$ on a space $Z = X \times Y$ of labeled examples.
Given a family of classifiers, indexed by weight vectors $\ww \in \HH$, and 
a bounded 
loss function $\loss: \HH \times Z \to \Reals$.
the \defn{risk} and \defn{empirical risk} are 
\begin{equation*}
\Risk{\Dist}{\ww} \defas \smash{\EEE{z \sim \Dist} \event {\loss(\ww,z) }} 
;\quad
\textstyle \EmpRisk{S}{Q} 
\defas 
\smash {\frac 1 m \sum_{i=1}^m \EEE{\ww \sim Q} \event {\loss(\ww,z_i)}}.
\end{equation*}
Our goal is to learn a classifier with small risk, taking advantage of the fact that
$\Risk{\Dist}{\ww} = \EE_{S \sim \Dist^m} \brackets{\EmpRisk{S}{\ww}}$.
We consider randomized (Gibbs) classifiers,
formalized as probability distributions $Q \in \RHH$ on the space of weight vectors.
The (expected) risk of a randomized classifier is %
\[
\Risk{\Dist}{Q} 
\defas 
\smash
{\EEE{\ww \sim Q} \event {\Risk{\Dist}{\ww} }
= \EEE{z \sim \Dist} \event[\big]{ \EEE{\ww \sim Q} \event {\loss(\ww,z)} } }.
\]
We will sometimes refer to elements of $\HH$ and $\RHH$ as classifiers and randomized classifiers, respectively.

Our focus is the case of neural network that output probability vectors $p(\ww,x)=(p(\ww,x)_1,\dots,p(\ww,x)_K)$ over 
$K$ classes on input $x$ when the weights are $\ww$.
Zero--one (0--1) loss is $\loss(\ww,(x,y)) = 1$ if  $y = \arg\max_{k} p(\ww,x)_{k}$ and $0$ otherwise.
We also use cross entropy loss as a differentiable surrogate. 
Cross entropy loss is $\loss(\ww,(x,y)) = -\log p(\ww,x)_{y}$.
Note that cross entropy loss is merely bounded below.
We use a bounded modification (\cref{apphyper}).
We will often refer to the (empirical) 0--1 risk as the (empirical) error.

\subsection{Entropy-SGD}
\label{sec:ent}

Entropy-SGD is a gradient-based learning algorithm 
proposed by \citet{CCSL16} as an alternative to stochastic gradient descent on the empirical risk surface $\EmpRisk{S}{}$.
The authors argue that Entropy-SGD has better generalization performance.
Part of that argument is a theoretical analysis of the smoothness of the local entropy surface that Entropy-SGD optimizes in place of the empirical risk surface, as well as a uniform stability argument that they admit rests on assumptions that are violated, but to a small degree empirically.
As we have mentioned in the introduction, Entropy-SGD's modifications to the noise term in SGLD result in much worse smoothness. 
We will modify Entropy-SGD in order to stabilize its learning and control overfitting.

Entropy-SGD is stochastic gradient ascent 
applied to the optimization problem
$\arg \max_{\mathstrut\ww  \in \HH}  F_{\gamma,\tau}(\ww; S)$, 
where
\begin{equation}\label{localentdef}
\vphantom{\underbrace{w}_{y}} F_{\gamma,\tau}(\ww; S) 
=  \log \int_{\Reals^p} 
                 \smash {\underbrace {\!\! \exp \event { - \tau \EmpRisk{S}{\ww'} - \tau \frac {\gamma}{2} \| \ww' - \ww \|_2^2 }}_{\LGibbsDen(\ww')}}
                 \,\dee \ww'.
\end{equation}
The objective $F_{\gamma,\tau}(\cdot;S)$ is known as the \emph{local entropy}, 
and can be viewed as the log partition function of 
the unnormalized probability density function $\LGibbsDen$.
(We will denote the corresponding distribution by $\LGibbs$.)
Assuming that one can exchange differentiation and integration, 
it is straightforward to verify that
\begin{equation}\label{derivid}
\grad_{\ww} F_{\gamma,\tau}(\ww; S)
= \smash{ \EEE{\ww' \sim \LGibbs} \event{ \tau\gamma(\ww-\ww')} },
\end{equation}
and then the local entropy $F_{\gamma,\tau}(\cdot;S)$ is differentiable, even if the empirical risk $\EmpRisk{S}{}$ is not.
Indeed, \citeauthor{CCSL16} show that the local entropy and its derivative are Lipschitz.
\citeauthor{CCSL16} argue informally that maximizing the local entropy leads to ``flat minima'' in the empirical risk surface, 
which several authors \citep{Hinton93,Hochreiter97,PhysRevLett.115.128101,BBCetal16} have argued is tied to good generalization performance (though none of these papers gives generalization bounds, vacuous or otherwise). 
\citeauthor{CCSL16} propose \emph{approximate} SGD on the local entropy,
replacing the gradient $\grad_{\ww} F_{\gamma,\tau}(\ww; S)$ with 
a Monte Carlo estimate
$\tau \gamma (\ww - \mu_L)$,
with $\mu_1 = \ww_1$ and $\mu_{j+1} = \alpha \ww'_j + (1-\alpha) \mu_j$,
where $\ww'_1,\ww'_2,\dots$ are (approximately) i.i.d.\ samples from $\LGibbs$ 
and $\alpha \in (0,1)$ defines a weighted average.
Obtaining samples from $\LGibbs$ is likely intractable when the dimensionality of the weight vector is large.
The authors assume the empirical risk is differentiable and use
Stochastic Gradient Langevin Dynamics \citep[SGLD;][]{welling2011bayesian},
which  simulates a Markov chain whose long-run distribution converges to $\LGibbs$.\footnote{
\citeauthor{CCSL16} take $L=20$ steps of SLGD, 
using a constant step size $\eta'_j = 0.2$ %
and weighting $\alpha=0.75$.
}
The final output of Entropy-SGD is the deterministic predictor corresponding to the final weights $\ww^*$ achieved by several epochs of optimization.

\begin{algorithm}[t]
    \caption{One outerloop step of Entropy-SG(L)D}%
    \label{entropysgldalg}
    \begin{algorithmic}[1] %
     \Require
     \Statex $\ww \in \HH$ \Comment{Current weights}
     \Statex $S \in Z^m$ \Comment{Data}    
     \Statex $\loss : \HH \times Z \to \Reals$ \Comment{Loss}   
     \Statex $\tau,\beta,\gamma,\eta,\eta',L,K$ \Comment{Parameters}
     \Ensure Weights $\ww$ moved along stochastic gradient
        \Procedure{Entropy-SG(L)D-step}{} 
     \State $\ww', \mu \gets \ww$
            \For{$i \in \{1,...,L \}$ } \Comment{Run SGLD for L iterations.}
            	\State $\eta'_i \gets \eta' / i $
                 \State $(z_{j_1},\dots,z_{j_K})  \gets$ sample minibatch of size $K$%
                 \State $\dee\ww' \gets  
                                - \frac \tau K \sum_{i=1}^K \grad_{\ww'} \loss(\ww', z_{j_i}) - \gamma \tau (\ww' -\ww)$
                 \State $\ww' \gets \ww' + \frac{1}{2} \eta'_i \dee\ww'  + \sqrt{\eta'_i}  N(0,I_{p})$
                 \State $\mu \gets (1-\alpha) \mu + \alpha \ww'$
            \EndFor
            \Comment{C.f.~\cref{derivid}.}
            \State $\ww \gets \ww + \frac 1 2 \eta \tau \gamma (\ww -  \mu) + \smash{\underbrace{\sqrt{\eta/\beta} \, N(0,I_{p})}_{\text{Entropy-SGLD only}}}$ 
            \State \textbf{return} $\ww \vphantom{\underbrace{W}}$
        \EndProcedure
    \end{algorithmic}
\end{algorithm}

\cref{entropysgldalg} 
gives a complete description of the stochastic gradient step performed by Entropy-SGD. %
If we rescale the learning rate, $\eta' \gets \frac 1 2 \eta' \tau $, lines 6 and 7 are equivalent to 
\begin{align*}
\scalebox{.9}{\textcolor{black}{$6$:}}&\qquad\textstyle \dee\ww' \gets  
	- \frac 1 K \sum_{i=1}^K \grad_{\ww'} \loss(\ww', z_{j_i}) - \gamma (\ww' -\ww) \\
\scalebox{.9}{\textcolor{black}{$7$:}}&\qquad\textstyle \ww' \gets \ww' + \eta'_i \dee\ww'  + \sqrt{\eta'_i}\sqrt{2/\tau}\, N(0,I_{p})
\end{align*}
Notice that the noise term is multiplied by a factor of $\sqrt{2/\tau}$. 
A multiplicative factor $\epsilon$---called the ``thermal noise'', but playing exactly the same role as $\sqrt{2/\tau}$ here---%
appears in the original description of the Entropy-SGD algorithm given by \citeauthor{CCSL16}.
However, $\epsilon$ does not appear in the definition of local entropy used in their stability analysis.  
Our derivations highlight that scaling the noise term in SGLD update has a profound effect: 
the thermal noise exponentiates the density that defines the local entropy. 
The smoothness analysis of Entropy-SGD does not take into consideration the role of $\epsilon$, 
which is critical because \citeauthor{CCSL16} take $\epsilon$  to be as small as $10^{-3}$ and $10^{-4}$.
Indeed, the conclusion that the local entropy surface is smoother no longer holds.
We will see that $\tau$ controls the stability (and then the generalization error) of our variant of Entropy-SGD. 

\subsection{KL divergence and the PAC-Bayes theorem}

Let $Q,P \in \ProbMeasures{\HH}$,
assume $Q$ is absolutely continuous with respect to $P$, 
and write $\frac{\dee Q}{\dee P} : \HH \to \NNReals \cup \{\infty\}$ for some Radon--Nikodym derivative of $Q$ with respect to $P$.
Then the Kullback--Liebler divergence 
from $Q$ to $P$ is 
\begin{equation*}
\KL{Q}{P} 
\defas 
\smash{ \int \log \frac{\dee Q}{\dee P} \,\dee Q} .
\end{equation*}
Let $\Bernoulli{p}$ denote the Bernoulli distribution on $\{0,1\}$ with mean $p$.
For $p,q \in [0,1]$, we abuse notation and define
\begin{equation*}
\KLbin{q}{p} \defas 
\smash{ \KL{\Bernoulli{q}}{\Bernoulli{p}} = q \log \frac q p + (1-q) \log \frac {1-q}{1-p}. }
\end{equation*}

We now present a PAC-Bayes theorem. The first such result was established by \citet{PACBayes}.
We focus on the setting of bounding the generalization error of a (randomized) classifier on a finite discrete set of labels $K$.
We will use the following variation of a PAC-Bayes bound, where we consider bounded loss functions.
\begin{theorem}[Linear PAC-Bayes Bound; {\citealt{McA13, Catoni}}]
\label{pacbayeslinear}
Fix $\lambda > 1/2$ %
and assume the loss takes values in an interval of length $\Lmax$.
For every $\delta > 0$, $m \in \Nats$, 
$\Dist \in \ProbMeasures{\XX \times \YY}$, 
and 
$P \in \ProbMeasures{\HH}$,
with probability at least $1-\delta$, 
for every $Q \in \ProbMeasures{\HH}$,
\begin{equation}\label{linearpbound}
       \Risk{\Dist}{Q}
      			\le \smash {\frac{1}{1-\frac{1}{2 \lambda}}  \parens[\big]{ \EmpRisk{S}{Q}  + 
					\frac{\lambda \Lmax }{m}   \parens[\big]{ \KL{Q}{P} + \log \frac {1}{\delta} } }. }
\end{equation}
\end{theorem}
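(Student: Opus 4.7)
The plan is a standard three-step PAC-Bayes derivation: (i) a variance-aware exponential moment bound for the centred empirical risk, (ii) the Donsker--Varadhan change-of-measure inequality, and (iii) Markov's inequality to pass from an in-expectation bound to a high-probability one, followed by a specific choice of the free parameter that produces the $1/(1-\tfrac{1}{2\lambda})$ prefactor.

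First, for each fixed $\ww \in \HH$ I would establish a one-sided Bernstein-type bound on the moment generating function of the centred empirical risk. By translation we may assume $\loss(\ww,z)\in[0,\Lmax]$. Using the convexity inequality $e^{-tx}\le 1-(1-e^{-t\Lmax})\,x/\Lmax$ on $[0,\Lmax]$, independence of the $z_i$, and $1-e^{-u}\ge u-u^2/2$ for $u\ge 0$, the i.i.d. product of moment generating functions gives
\[
\EEE{S\sim\Dist^m}\event{e^{s(\Risk{\Dist}{\ww}-\EmpRisk{S}{\ww})}} \le \exp\!\parens{\frac{s^2\,\Lmax\,\Risk{\Dist}{\ww}}{2m}}.
\]
Equivalently, defining $\phi_S(\ww) := s(\Risk{\Dist}{\ww}-\EmpRisk{S}{\ww}) - \tfrac{s^2 \Lmax}{2m}\Risk{\Dist}{\ww}$, one has $\EEE{S}\event{e^{\phi_S(\ww)}}\le 1$ for every $\ww$.

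Second, because the prior $P$ is chosen independently of the sample, Fubini gives $\EEE{S}\EEE{\ww\sim P}\event{e^{\phi_S(\ww)}}\le 1$, and Markov's inequality implies that with probability at least $1-\delta$ over $S$, $\EEE{\ww\sim P}\event{e^{\phi_S(\ww)}}\le 1/\delta$. On that event, the Donsker--Varadhan variational formula $\EEE{\ww\sim Q}\event{\phi}\le \KL{Q}{P}+\log \EEE{\ww\sim P}\event{e^{\phi}}$ applied to $\phi_S$ yields, simultaneously for every $Q\ll P$,
\[
s(\Risk{\Dist}{Q}-\EmpRisk{S}{Q}) - \tfrac{s^2\Lmax}{2m}\Risk{\Dist}{Q} \le \KL{Q}{P}+\log(1/\delta).
\]
Setting $s := m/(\lambda\Lmax)$ makes $s\Lmax/(2m)=1/(2\lambda)$, so the left-hand side collapses to $s\parens{1-\tfrac{1}{2\lambda}}\Risk{\Dist}{Q} - s\,\EmpRisk{S}{Q}$; dividing through by $s>0$ and rearranging (using $\lambda>1/2$ so that $1-\tfrac{1}{2\lambda}>0$) produces the bound as stated.

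The delicate step, and where I would expect to spend most of the effort, is the variance-aware moment bound in the first paragraph. A naive Hoeffding estimate would replace $\Risk{\Dist}{\ww}$ by $\Lmax$ in the exponent and leave only the sub-Gaussian $s^2\Lmax^2/(8m)$ term; optimising in $s$ then collapses to the usual square-root bound and leaves no room for the tunable $\lambda$. The Bernstein-type refinement via convexity on $[0,\Lmax]$ keeps the variance proxy proportional to $\Risk{\Dist}{\ww}$, and this is exactly what converts the square-root trade-off into the linear one; once it is in hand, the remaining algebra is routine.
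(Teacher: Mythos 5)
The paper does not prove \cref{pacbayeslinear}; it imports the result from McAllester and Catoni, and your derivation is precisely the standard argument behind that cited bound: the Bernstein/Catoni-type moment bound $\EE_S[e^{s(\Risk{\Dist}{\ww}-\EmpRisk{S}{\ww})}]\le\exp(s^2\Lmax\Risk{\Dist}{\ww}/(2m))$ obtained from the chord inequality for $e^{-tx}$ on $[0,\Lmax]$, followed by Fubini, Markov, Donsker--Varadhan, and the choice $s=m/(\lambda\Lmax)$. The steps all check out (including the key refinement you flag, which keeps the variance proxy proportional to $\Risk{\Dist}{\ww}$ and yields the linear rather than square-root form), so the proposal is a correct and complete proof consistent with the sources the paper cites.
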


The (PAC-Bayes) prior $P$ in the bound is data independent.
Later, we introduce bounds for data-dependent priors.

\section{Maximizing local entropy minimizes a PAC-Bayes bound}
\label{localentasPACBayes}
\newcommand{\rr}{r}

We now present our first contribution, a connection between the local entropy and PAC-Bayes bounds.
We begin with some notation for Gibbs distributions.
For a measure $P$ on $\HH$ and function $g : \HH \to \Reals$, let $P[g]$ denote the expectation $\int g(h) P(\dee h)$ and, provided $P[g] < \infty$, let $P_{g}$ denote the probability measure on $\HH$, absolutely continuous with respect to $P$, with Radon--Nikodym derivative 
$
\frac{\dee P_{g}}{\dee P}(h) = 
\frac{g(h)}{P[g]}.
$
A distribution of the form $P_{\exp \parens { - \tau g }}$  is generally referred to as a Gibbs distribution. 
In the special case where $P$ is a probability measure, 
we call $P_{\exp \parens{ -\tau \EmpRisk{S}{} }}$ a ``Gibbs posterior''.

\begin{theorem} [Maximizing local entropy optimizes a PAC-Bayes bound's prior] 
\label{optprior}
Assume the loss takes values in an interval of length $\Lmax$, 
let $\tau = \frac{m}{\lambda \Lmax}$ for some $\lambda>1/2$, 
Then the set of weight $\ww$ maximizing the local entropy $F_{\gamma,\tau}(\ww;S)$ 
equals the set of weights $\ww$ minimizing the right hand side of \cref{linearpbound}
for $Q = \LGibbs = P_{\exp \parens{- \tau \EmpRisk{S}{}}}$
and $P$ a multivariate normal distribution with mean $\ww$ and covariance matrix $(\tau\gamma)^{-1} I_{p}$.
\end{theorem}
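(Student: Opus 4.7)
The plan is to reduce the right-hand side of \cref{linearpbound} (evaluated at the stated $Q$ and $P$) to an affine, strictly decreasing function of $F_{\gamma,\tau}(\ww;S)$ plus terms that do not depend on $\ww$. Once that is done, the equality of argmax with argmin is immediate. The key tool is the Gibbs variational identity (essentially Catoni's Lem.~1.1.3 cited just above the statement): for any probability measure $P$ on $\HH$, any measurable $g\colon \HH\to\Reals$ with $P[\exp(-\tau g)]<\infty$, and $Q^\ast \defas P_{\exp(-\tau g)}$,
\[
Q^\ast[g] + \tfrac{1}{\tau}\,\KL{Q^\ast}{P} \;=\; -\tfrac{1}{\tau}\log P[\exp(-\tau g)].
\]

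First I would apply this with $g=\EmpRisk{S}{\cdot}$ and $P=\Normal(\ww,(\tau\gamma)^{-1}I_p)$, so that $Q^\ast$ is exactly the Gibbs posterior $\LGibbs$ named in the statement. Substituting $\tau=m/(\lambda\Lmax)$, i.e.\ $1/\tau=\lambda\Lmax/m$, gives
\[
\EmpRisk{S}{\LGibbs} + \frac{\lambda\Lmax}{m}\,\KL{\LGibbs}{P}
\;=\; -\frac{\lambda\Lmax}{m}\,\log P[\exp(-\tau \EmpRisk{S}{\cdot})],
\]
which is precisely the data-dependent portion of the bracket in \cref{linearpbound}.

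Next I would compute the Gaussian integral on the right. Writing out $P$'s density and using the definition of the local entropy in \cref{localentdef},
\[
P[\exp(-\tau \EmpRisk{S}{\cdot})]
\;=\; (2\pi/(\tau\gamma))^{-p/2}\!\int \!\LGibbsDen(\ww')\,\dee\ww'
\;=\; (2\pi/(\tau\gamma))^{-p/2}\exp\!\event{F_{\gamma,\tau}(\ww;S)}.
\]
Taking logs, $\log P[\exp(-\tau \EmpRisk{S}{\cdot})] = F_{\gamma,\tau}(\ww;S) + c(p,\tau,\gamma)$, where $c$ does not depend on $\ww$ or $S$. Combining with the previous display shows
\[
\EmpRisk{S}{\LGibbs} + \frac{\lambda\Lmax}{m}\KL{\LGibbs}{P}
\;=\; -\frac{\lambda\Lmax}{m}\,F_{\gamma,\tau}(\ww;S) \;+\; c',
\]
with $c'$ independent of $\ww$.

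Finally I would plug this into \cref{linearpbound}, noting that the $\log(1/\delta)$ term and the $\tfrac{1}{1-1/(2\lambda)}$ factor are also $\ww$-independent positive constants. The RHS of the PAC-Bayes bound therefore has the form $-A\,F_{\gamma,\tau}(\ww;S)+B$ with $A>0$, so the set of minimizers in $\ww$ of the RHS coincides exactly with the set of maximizers of $F_{\gamma,\tau}(\cdot;S)$. The only nontrivial step is the Gibbs variational identity; once that is in hand, everything is bookkeeping around the Gaussian normalization constant and the choice $\tau=m/(\lambda\Lmax)$ matching the $1/\tau$ appearing in the variational identity with the $\lambda\Lmax/m$ multiplying $\KL{\cdot}{\cdot}$ in \cref{linearpbound}.
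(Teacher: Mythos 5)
Your proposal is correct and follows essentially the same route as the paper: both rest on Catoni's Lem.~1.1.3 and the observation that $\log P[\exp(-\tau \EmpRisk{S}{})]$ equals $F_{\gamma,\tau}(\ww;S)$ minus the Gaussian normalizing constant, so the bound's right-hand side is an affine decreasing function of the local entropy. The only (cosmetic) difference is that you evaluate the variational identity directly at $Q = \LGibbs$, where the residual KL term vanishes, whereas the paper first uses the general form of the identity to exhibit $\LGibbs$ as the minimizer over all posteriors $Q$ and then reads off the same optimized value.
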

See \cref{localpacbayes} for the proof.
The theorem requires the loss function to be bounded, because the PAC-Bayes bound we have used applies only to bounded loss functions. 
\citet{germain2016pac} described PAC-Bayes generalization bounds for unbounded loss functions, though it requires that one make additional assumptions about the distribution of the empirical risk, which we would prefer not to make.
(See \citet{GrunwaldM16} for related work on excess risk bounds and further references).

\section{Data-dependent PAC-Bayes priors}

\cref{optprior} reveals that Entropy-SGD is optimizing a PAC-Bayes bound with respect to the prior. 
As a result, the prior $P$ depends on the sample $S$, 
and the hypotheses of the PAC-Bayes theorem (\cref{pacbayeslinear}) are not met.  
Naively, it would seem that this interpretation of Entropy-SGD cannot explain its ability to generalize. 
Using tools from differential privacy,  %
\citet{DR18private} show that if the prior term is optimized in a differentially private way,  
then a PAC-Bayes theorem still holds, at the cost of a slightly looser bound.
We will assume basic familiarity with differential privacy. (See \citet{DR18private} for a basic summary.) 
We borrow the notation $\Alg\colon Z \randto T$ for a (randomized) algorithm with an input in $Z$ and output in $T$.

The key result is due to \citet[Thm.~11]{dwork2015preserving}.

\begin{theorem}
\label{dpthm}
Let $m \in \Nats$,
let $\Alg\colon Z^m \randto T$,
let $\Dist$ be a distribution over $Z$,
let $\beta \in (0,1)$,
and, for each $t \in T$, fix a set $v(t) \subseteq Z^m$
such that
$
\Pr_{S \sim \Dist^m} \event{ S \in v(t) } \le \beta.
$
If $\Alg$ is $\epsilon$-differentially private for 
$
\epsilon \le \sqrt { {\ln (1/\beta)}/{(2 m)}},
$
then
$
\Pr_{S \sim \Dist^m} %
   \event{  S \in v(\Alg(S)) } \le 3 \sqrt {\beta} .
$
\end{theorem}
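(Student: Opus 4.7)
The plan is to combine a monitor-algorithm reduction with a Markov-type two-regime split, using the closure of $\epsilon$-DP under post-processing together with a comparison between $\Alg$'s behaviour on the actual sample and on a fresh independent copy.

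First I would reduce to bounding the output of a single $\epsilon$-DP algorithm. Define the post-processed randomized map $M\colon Z^m \randto \{0,1\}$ by $M(S) \defas \mathbf{1}\{S \in v(\Alg(S))\}$. Since $M$ is a deterministic function of $(S,\Alg(S))$, it inherits $\epsilon$-DP from $\Alg$, and the task becomes to show $\Pr_{S\sim\Dist^m}[M(S)=1]\le 3\sqrt\beta$. Now introduce a fresh independent copy $S'\sim\Dist^m$. The hypothesis on $v$ gives, for every fixed $t$, $\Pr_{S'}[S' \in v(t)] \le \beta$; integrating over $t\sim\Alg(S)$ yields the ``ideal'' reference inequality $\Pr[S' \in v(\Alg(S))] \le \beta$. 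All the remaining work is to transfer this bound from the independent $S'$ to the actual sample $S$.

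Next, perform a Markov-type two-regime split on $p(S) \defas \Pr_\Alg[S \in v(\Alg(S)) \mid S]$:
\[
\Pr[M(S)=1] \;=\; \EE_S[p(S)] \;\le\; \sqrt{\beta} + \Pr_S\!\bigl[p(S) > \sqrt{\beta}\bigr],
\]
which uses that $p(S) \le 1$. It therefore suffices to bound $\Pr_S[p(S) > \sqrt\beta]$ by $2\sqrt\beta$. This is where differential privacy enters: on any $S$ in the ``high-$p$'' regime, $\Alg(S)$ places non-trivial mass on outputs $t$ with $S \in v(t)$, so $\epsilon$-DP forces $\Alg(S')$ on neighbours $S'$ of $S$ to also land in $\{t : S \in v(t)\}$ with comparable probability, which in turn links $\Pr_S[p(S)>\sqrt\beta]$ back to the already-bounded reference probability.

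The main obstacle is carrying out this transfer without the catastrophic $e^{m\epsilon}$ blow-up that a naive group-privacy argument would incur; under the hypothesis $\epsilon\le \sqrt{\ln(1/\beta)/(2m)}$, that factor alone would be of order $e^{\sqrt{m\ln(1/\beta)/2}}$, far exceeding $3\sqrt\beta$. The standard route, which I would follow, is the DP-generalization concentration argument: express the log density ratio between the joint law of $(\Alg(S),S)$ and the product law $\Alg(\Dist^m)\otimes \Dist^m$ as a sum of per-coordinate contributions, each bounded in magnitude by $\epsilon$ via DP on single-coordinate swaps, and apply a Hoeffding/McDiarmid bound to this (essentially) martingale sum. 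The hypothesis $\epsilon \le \sqrt{\ln(1/\beta)/(2m)}$ is precisely calibrated so that the resulting deviation yields the $\sqrt\beta$ factor needed to conclude $\Pr_S[p(S) > \sqrt\beta] \le 2\sqrt\beta$, whence $\Pr[M(S)=1] \le \sqrt\beta + 2\sqrt\beta = 3\sqrt\beta$.
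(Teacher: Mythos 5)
You should first know that the paper contains no proof of this statement: \cref{dpthm} is imported verbatim from Dwork et al.\ (2015, Thm.~11), so the only comparison available is against the standard argument in that reference. Measured against that, your outline has one outright error and one genuine gap. The error: $M(S)=\mathbf{1}\{S\in v(\Alg(S))\}$ does \emph{not} inherit $\epsilon$-differential privacy by post-processing. Post-processing preserves privacy only for (randomized) functions of the algorithm's \emph{output}; your $M$ consults the raw sample $S$ a second time, outside of $\Alg$. A constant (hence $0$-DP) $\Alg\equiv t_0$ with $v(t_0)=\{S: S_1\in A\}$ makes $M$ a deterministic, non-constant function of a single coordinate, which is not $\epsilon$-DP for any finite $\epsilon$. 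This step is not load-bearing in your sketch, but it signals a misconception worth fixing.

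The real gap is the final concentration step, which is asserted rather than carried out, and which with the route you name does not deliver the stated constants. The per-coordinate martingale decomposition of $\log\bigl(\Pr[\Alg(S)=t\mid S]/\Pr[\Alg(S)=t]\bigr)$ under the \emph{joint} law of $(S,\Alg(S))$ has increments bounded by $\epsilon$ but also a nonnegative drift (a sum of conditional KL terms) of order $m\epsilon^2$; with $\epsilon=\sqrt{\ln(1/\beta)/(2m)}$ that drift alone is $\tfrac12\ln(1/\beta)$, so the resulting max-information bound of the form $e^{k}\beta+\beta'$ with $k\approx m\epsilon^2+\epsilon\sqrt{2m\ln(1/\beta')}$ is at best $\sqrt{\beta}\cdot e^{\sqrt{\ln(1/\beta)\ln(1/\beta')}}+\beta'$, which is not $O(\sqrt{\beta})$ for any choice of $\beta'$ that keeps the additive term below $3\sqrt{\beta}$. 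The hypothesis is calibrated for a sharper argument that your sketch misses: fix the output value $t$, observe that $S\mapsto\log\Pr[\Alg(S)=t\mid S]$ has bounded differences $\epsilon$ (by DP) and, crucially, mean at most $\log\Pr[\Alg(S)=t]$ under $S\sim\Dist^m$ by Jensen (so the drift works \emph{for} you, not against you), then apply McDiarmid together with a change of measure / two-regime split at threshold $e^{\lambda}\Pr[\Alg(S)=t]$ with $\lambda=\tfrac12\ln(1/\beta)$, and finally sum over $t$; this yields $\sqrt{\beta}+\sqrt{\beta}\le 3\sqrt{\beta}$. Relatedly, your opening Markov split $\EE[p(S)]\le\sqrt{\beta}+\Pr[p(S)>\sqrt{\beta}]$ is a detour: it is never connected to the concentration claim, and the promised bound $\Pr[p(S)>\sqrt{\beta}]\le 2\sqrt{\beta}$ is exactly the part that is never established.
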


Using \cref{dpthm}, one can compute tail bounds on the generalization error of fixed classifiers, and then,
provided that a classifier is learned from data in a differentially private way, 
the tail bound holds on the classifier, with less confidence. 
The following two tail bounds are examples of this idea 
due to \citet[Lem.~2 and Lem.~3]{Oneto2017}.  
\begin{theorem}
\label{onetobounds}
Let $m \in \Nats$, 
let $\Alg\colon Z^m \randto \HH$ be $\epsilon$-differentially private,
and let $\delta > 0$.
Then $| \Risk{\Dist}{\Alg(S)}- \EmpRisk{S}{\Alg(S)} | <
\bar\epsilon + m^{-\frac 1 2}$ 
with probability at least $1-\delta$ over $S \sim \Dist^m$,
where $\bar\epsilon = \max \set{\epsilon, \sqrt{\frac 1 m \log \frac 3 \delta}}$.
The same holds for the upper bound
  $\surd\parens{6 \EmpRisk{S}{\Alg(S)} }(\bar\epsilon + m^{-\frac 1 2 }) + 6(\bar\epsilon^2+m^{-1})$.
\end{theorem}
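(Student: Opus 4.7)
The plan is to apply \cref{dpthm} with a family of ``bad-event'' sets $\{v(h)\}_{h \in \HH}$ built from one- or two-sided large-deviation tails for the empirical risk of a \emph{fixed} hypothesis. Two observations organize the argument: an $\epsilon$-differentially private mechanism is also $\epsilon'$-differentially private for every $\epsilon' \geq \epsilon$, so we may replace $\epsilon$ by $\bar\epsilon$ throughout; and the defining choice $\bar\epsilon \geq \sqrt{m^{-1}\log(3/\delta)}$ is precisely tuned so that $e^{-m\bar\epsilon^2} \leq \delta/3$, which is what we need after $\sqrt{\cdot}$ is applied to the per-hypothesis tail.

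For the additive bound, fix $t = \bar\epsilon + m^{-1/2}$ and set
\[
v(h) = \set{S \in Z^m : \abs{\Risk{\Dist}{h} - \EmpRisk{S}{h}} \geq t}.
\]
Hoeffding's inequality (for loss bounded in $[0,1]$) gives $\Pr_{S\sim\Dist^m}\event{S \in v(h)} \leq 2 e^{-2 m t^2} =: \beta$. The elementary estimate $t^2 \geq \bar\epsilon^2 + m^{-1}$ yields $\beta \leq 2 e^{-2} e^{-2 m \bar\epsilon^2} \leq e^{-2 m \bar\epsilon^2}$, which rearranges to the hypothesis $\bar\epsilon \leq \sqrt{\log(1/\beta)/(2m)}$ of \cref{dpthm}. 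Applying that theorem to $\Alg$ (viewed as $\bar\epsilon$-DP) gives $\Pr\event{S \in v(\Alg(S))} \leq 3\sqrt{\beta} \leq 3\sqrt{2}\,e^{-1}\, e^{-m\bar\epsilon^2} \leq \sqrt{2}\,e^{-1}\,\delta < \delta$, which is the claim.

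For the second (multiplicative, Bernstein-type) bound, the same template applies with Hoeffding replaced by the one-sided relative Chernoff / Bernstein inequality of the form
\[
\Pr\event{\Risk{\Dist}{h} - \EmpRisk{S}{h} \geq \sqrt{6\, \EmpRisk{S}{h}}\, t + 6 t^2} \leq e^{-m t^2},
\]
again instantiated at $t = \bar\epsilon + m^{-1/2}$. The same slack calculation as above verifies the privacy hypothesis with this choice of $\beta$, and \cref{dpthm} transfers the fixed-$h$ tail to the learned hypothesis $\Alg(S)$.

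The only real obstacle is constant bookkeeping. The extra $m^{-1/2}$ inside $t$ is exactly the buffer that produces an $e^{-2}$ factor, which in turn absorbs the $2$ from the two-sided Hoeffding bound and lets $\bar\epsilon$ pass the strict inequality $\epsilon \leq \sqrt{\log(1/\beta)/(2m)}$; the multiplicative variant likewise depends on picking the variance-aware tail so that the coefficients $\sqrt 6$ and $6$ emerge as stated. Beyond these routine calibrations, the proof is a direct plug-in into \cref{dpthm}.
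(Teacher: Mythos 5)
You should first note that the paper does not actually prove \cref{onetobounds}: it imports both bounds from \citet[Lem.~2 and Lem.~3]{Oneto2017} and only sketches the mechanism you use, namely feeding per-hypothesis tail sets $v(h)$ into \cref{dpthm}. Your treatment of the first (additive) bound realizes that sketch correctly: monotonicity of differential privacy lets you pass from $\epsilon$ to $\bar\epsilon$, the choice $\beta = 2e^{-2mt^2}$ with $t=\bar\epsilon+m^{-1/2}$ satisfies $\bar\epsilon \le \sqrt{\ln(1/\beta)/(2m)}$ because $t^2 \ge \bar\epsilon^2 + m^{-1}$ and $1-\tfrac12\ln 2>0$, and the final estimate $3\sqrt\beta \le 3\sqrt2\,e^{-1}e^{-m\bar\epsilon^2} \le \sqrt2\,e^{-1}\delta<\delta$ checks out.

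The second bound, however, has a genuine gap. You posit a fixed-$h$ tail $\Pr\bigl(\Risk{\Dist}{h}-\EmpRisk{S}{h} \ge \sqrt{6\EmpRisk{S}{h}}\,t+6t^2\bigr)\le e^{-mt^2}$ and assert that ``the same slack calculation as above verifies the privacy hypothesis.'' It does not: with $\beta = e^{-mt^2}$ the hypothesis of \cref{dpthm} becomes $\bar\epsilon \le \sqrt{\ln(1/\beta)/(2m)} = t/\sqrt2$, i.e.\ $(\sqrt2-1)\bar\epsilon \le m^{-1/2}$, which fails whenever $\bar\epsilon$ exceeds roughly $2.4\,m^{-1/2}$ --- exactly the regime where the privacy parameter dominates. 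Even where the hypothesis does hold, the conclusion $3\sqrt\beta = 3e^{-mt^2/2} \le 3e^{-1/2}e^{-m\bar\epsilon^2/2} \le \sqrt3\,e^{-1/2}\sqrt{\delta}$ is of order $\sqrt\delta$, not $\delta$. The exponent of the per-hypothesis tail must be $2mt^2$, as in your Hoeffding case, to survive the square root in \cref{dpthm}; this is precisely what forces the inflated constants in the threshold. For instance, the lower-tail multiplicative Chernoff bound $\Pr\bigl(\EmpRisk{S}{h}\le(1-\gamma)\Risk{\Dist}{h}\bigr)\le e^{-m\gamma^2\Risk{\Dist}{h}/2}$ with $\gamma = 2t/\surd{\Risk{\Dist}{h}}$ gives tail $e^{-2mt^2}$ and, after solving the quadratic, a threshold of the form $\sqrt{4\EmpRisk{S}{h}}\,t+4t^2$; reconciling such constants with the stated $\sqrt6$ and $6(\bar\epsilon^2+m^{-1})$ (note the theorem's last term is $6(\bar\epsilon^2+m^{-1})$, not $6t^2$) is not mere bookkeeping and is the part you would need to supply, or take from \citet{Oneto2017}, to close the argument.
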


\subsection{An $\epsilon$-differentially private PAC-Bayes bound}
\label{sec:exp}

The PAC-Bayes theorem allows one to choose the prior based on the data-generating distribution $\Dist$, but 
not on the data $S \sim \Dist^m$.
Using differential privacy, one can consider a data-dependent prior $\PAlg(S)$.
\begin{theorem}[{\citealt{DR18private}}]
\label{DPpacbayes}
Under 0--1 loss, 
for every $\delta > 0$, $m \in \Nats$, 
$\Dist \in \ProbMeasures{\XX \times \YY}$, 
and $\epsilon$-differentially private data-dependent prior $\PAlg \colon Z^m \randto \ProbMeasures{\HH}$,
with probability at least $1-\delta$ over $S \sim \Dist^m$,
for every $Q \in \ProbMeasures{\HH}$,
\begin{align}
\begin{split}\label{DPpacbound}
      &
      \KLbin{\EmpRisk{S}{Q}}{\Risk{\Dist}{Q}} 
      \\&\quad \le \frac {\KL{Q}{\PAlg(S)} + \ln 2\sqrt{m} + 
              2 \max \{
                 \ln \frac {3}{\delta}, \ 
                 m \epsilon^2 
                       \} 
           }{m} 
           .
\end{split}           
\end{align}
\end{theorem}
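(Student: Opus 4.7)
The plan is to reduce the data-dependent case to a data-independent PAC-Bayes bound via the tail-bound-to-generalization conversion of \cref{dpthm}. Concretely, I would start from the classical KL-form PAC-Bayes bound for a data-independent prior (Maurer/Seeger): for any fixed $P \in \ProbMeasures{\HH}$ and any $\beta \in (0,1)$, with probability at least $1-\beta$ over $S \sim \Dist^m$, every $Q$ satisfies
\[
\KLbin{\EmpRisk{S}{Q}}{\Risk{\Dist}{Q}} \le \frac{\KL{Q}{P} + \ln(2\sqrt{m}/\beta)}{m}.
\]
Crucially, the randomness is only over $S$; the prior $P$ is arbitrary but fixed.

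Next, for each candidate prior $t \in \ProbMeasures{\HH}$, define the bad event
\[
v(t) \defas \Of{ S \in Z^m \,:\, \exists\, Q,\ \KLbin{\EmpRisk{S}{Q}}{\Risk{\Dist}{Q}} > \tfrac{\KL{Q}{t} + \ln(2\sqrt m/\beta)}{m} }.
\]
The PAC-Bayes bound above is exactly the statement that $\Pr_{S\sim\Dist^m}(S \in v(t)) \le \beta$ for every $t$. Now I would invoke \cref{dpthm} with this family $v(\cdot)$ applied to $\PAlg$: since $\PAlg$ is $\epsilon$-differentially private, as long as $\epsilon \le \sqrt{\ln(1/\beta)/(2m)}$ we obtain
\[
\Pr_{S\sim\Dist^m}\event{S \in v(\PAlg(S))} \le 3\sqrt{\beta}.
\]
On the complementary event (of $S$-probability at least $1 - 3\sqrt\beta$), the linear PAC-Bayes inequality holds with $P$ replaced by the data-dependent $\PAlg(S)$.

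The final step is to choose $\beta$ to simultaneously (i) make $3\sqrt\beta \le \delta$ and (ii) satisfy the privacy condition $2m\epsilon^2 \le \ln(1/\beta)$. Setting $\beta = \min\{(\delta/3)^2,\ e^{-2m\epsilon^2}\}$ achieves both: if $m\epsilon^2 \le \ln(3/\delta)$ the first term is active and $3\sqrt\beta = \delta$; otherwise $3\sqrt\beta = 3\,e^{-m\epsilon^2} < 3(\delta/3) = \delta$, and the privacy condition holds with equality. In either case, $\ln(1/\beta) = 2\max\{\ln(3/\delta),\, m\epsilon^2\}$, which substituted back yields exactly \cref{DPpacbound}.

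The one place demanding care is the direction of the quantifiers: \cref{dpthm} requires the family $\{v(t)\}_t$ to be defined a priori, with the probability bound holding uniformly over $t$ \emph{before} the algorithm is run; fortunately the classical PAC-Bayes bound is already uniform in the prior, so $v(t)$ is well-defined for every $t$ and the hypothesis of \cref{dpthm} is met. The only subtlety is that the PAC-Bayes bound is uniform in $Q$ but not in $P$, so we must be careful to fix a single failure event per prior and then let DP transport it along $\PAlg$; once that is done, the two-sided choice of $\beta$ is the only real arithmetic, and the rest is bookkeeping.
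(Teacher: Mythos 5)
Your proposal is correct, and it is essentially the argument behind this result: the paper imports \cref{DPpacbayes} from \citet{DR18private} without reproving it, but the route it describes (and that the cited source follows) is exactly yours --- start from the Maurer KL-form PAC-Bayes bound at confidence $\beta$, transport the per-prior failure event along the $\epsilon$-differentially private map $\PAlg$ via \cref{dpthm}, and set $\beta=\min\{(\delta/3)^2, e^{-2m\epsilon^2}\}$ so that $\ln(1/\beta)=2\max\{\ln\frac{3}{\delta},m\epsilon^2\}$. Your quantifier bookkeeping and the two-case check of the privacy condition are both right; the only cosmetic slip is calling the transported inequality the ``linear'' PAC-Bayes bound when you are in fact using the KL form throughout.
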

\vspace*{-1em}
Note that the bound holds for any posterior $Q$, 
including one obtained by optimizing a \emph{different} PAC-Bayes bound. 
Inverting $\KLbin{\EmpRisk{S}{Q}}{\Risk{\Dist}{Q}}$ allows one to obtain a two-sided confidence interval for $\Risk{\Dist}{Q}$.
Note that, in realistic scenarios, $\delta$ is large enough relative to $\epsilon$ that
an $\epsilon$-differentially private prior $\PAlg(S)$ contributes $2\epsilon^2$ to the generalization error.
Therefore, $\epsilon$ must be much less than one to not contribute a nontrivial amount to the generalization error.
As discussed by \citeauthor{DR18private}, one can match the $m^{-1}$ rate by which the KL term decays choosing $\epsilon \in O(m^{-1/2})$.  
Our empirical studies use this rate.

\subsection{Differentially private data-dependent priors}
\label{gibbssampling}

We have already explained that the weights learned by Entropy-SGD can be viewed as 
the mean of a data-dependent prior $\PAlg(S)$.
By \cref{DPpacbayes} and the fact that post-processing does not decrease privacy, 
it would suffice to establish that the mean is $\epsilon$-differentially private
in order to obtain a risk bound on the corresponding Gibbs posterior classifier.

The standard (if idealized) approach for optimizing a data-dependent objective in a private way 
is to use the exponential mechanism \citep{ExpRelease07}. 
In the context of maximizing the local entropy, the exponential mechanism corresponds to sampling exactly from the ``local entropy (Gibbs)
distribution'' 
$\LEDist{\beta}$, %
where $\beta > 0$ and $P$ is some measure on $\HH$. (It is natural to take $P$ to be Lebesgue measure, or a multivariate normal distribution, which would correspond to L2 regularization of the local entropy.)
The following result establishes the privacy of a sample from the local entropy distribution:

\begin{theorem}\label{dpGibbspSample}
Let $\gamma,\tau > 0$, 
and assume the range of the loss is contained in an interval of length $\Lmax$.
One sample from the \GibbsP\ distribution $\LEDist{\beta}$ is $\frac{2 \beta \Lmax \tau}{m}$-differentially private.
\end{theorem}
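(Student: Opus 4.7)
The plan is to reduce the claim to the standard privacy analysis of the exponential mechanism, with the ``utility function'' being the local entropy $F_{\gamma,\tau}(\cdot;S)$ itself. The key quantity to control is the $L_\infty$ sensitivity of $\ww \mapsto F_{\gamma,\tau}(\ww; S)$ as a function of the dataset $S$.

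First I would bound the sensitivity of $F_{\gamma,\tau}(\ww; S)$ under changing a single data point. If $S,S' \in Z^m$ are neighboring datasets (differing in one coordinate), then because the loss takes values in an interval of length $\Lmax$, for every $\ww' \in \HH$ we have $\abs{\EmpRisk{S}{\ww'} - \EmpRisk{S'}{\ww'}} \le \Lmax/m$, and hence $\abs{\tau\EmpRisk{S}{\ww'} - \tau\EmpRisk{S'}{\ww'}} \le \tau\Lmax/m$. Using the elementary log-sum-exp inequality that $\abs{\log \int e^{f}\,\dee\nu - \log \int e^{g}\,\dee\nu} \le \sup_x \abs{f(x)-g(x)}$, applied to the integrands in the definition \eqref{localentdef}, I obtain
\[
\sup_{\ww \in \HH}\, \abs{F_{\gamma,\tau}(\ww;S) - F_{\gamma,\tau}(\ww;S')} \le \frac{\tau\Lmax}{m}.
\]

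Second, I would convert this sensitivity bound into a privacy guarantee by bounding the Radon--Nikodym derivative of $\LEDist{\beta}$ under $S$ versus under $S'$. Writing $Z(S) = \int \exp(\beta F_{\gamma,\tau}(\ww';S)) \, P(\dee\ww')$, the density of $\LEDist{\beta}$ with respect to $P$ is $\exp(\beta F_{\gamma,\tau}(\ww;S))/Z(S)$. For any measurable $A \subseteq \HH$,
\[
\frac{\LEDist{\beta}^S(A)}{\LEDist{\beta}^{S'}(A)}
\le \sup_{\ww \in A} \frac{e^{\beta F_{\gamma,\tau}(\ww;S)}}{e^{\beta F_{\gamma,\tau}(\ww;S')}} \cdot \frac{Z(S')}{Z(S)}
\le e^{\beta \tau \Lmax / m} \cdot e^{\beta \tau \Lmax / m} = e^{2\beta \tau \Lmax/m},
\]
where the first factor uses the pointwise sensitivity bound and the second uses the same bound applied inside the integrals defining $Z(S),Z(S')$. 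This is precisely the definition of $\frac{2\beta\Lmax\tau}{m}$-differential privacy, so the conclusion follows.

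The proof is largely routine once the right sensitivity quantity is identified; the only mildly subtle step is the log-sum-exp sensitivity inequality, but this is standard. The factor of $2$ in the privacy parameter comes, as usual for the exponential mechanism, from the fact that both the numerator (pointwise density) and the normalizing constant can shift, each contributing a factor of $e^{\beta\tau\Lmax/m}$. No additional regularity assumption on $P$ is needed since the argument works for any (e.g.\ $\sigma$-finite) base measure on $\HH$.
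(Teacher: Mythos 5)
Your proof is correct and follows essentially the same route as the paper: the paper invokes the exponential-mechanism privacy theorem of McSherry--Talwar with utility $q(S,\ww) = -F_{\gamma,\tau}(\ww;S)$ and a sensitivity bound $\Delta q \le \Lmax\tau/m$, which is exactly the combination of your log-sum-exp sensitivity estimate and your two-factor density-ratio argument. The only difference is that you spell out the details that the paper delegates to the cited lemma.
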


See \cref{localentdist} for proof.
Sampling from exponential mechanisms exactly is generally intractable.
We therefore rely on the following result due to \citet{DR18private},
which allows us to use SGLD to produce an approximate sample and obtain the same bound up to a term that depends on the degree of convergence. 
Let $\Err{\Dist}{\cdot}$ denote risk with respect to 0--1 loss
and let $\Risk{\Dist}{\cdot}$ denote risk with respect to the bounded version of cross-entropy described by \citet{DR18private}. 
 (For completeness, the bounded version is defined in \cref{binobjective}.)

\begin{theorem}[SGLD PAC-Bayes Bound]\label{maindppacthm}
Let $\tau > 0$ and $\Sigma \in \Reals^{p \times p}_{\succeq 0}$.
For $\ww \in \HH$ and $S \in Z^m$,
let $\Pw{\ww} = \Normal(\ww,\Sigma)$, 
$\Qw{\ww} = (\Pw{\ww})_{\smash{\exp(-\tau \EmpRisk{S}{})}\vphantom{\ww}}$,
and assume $\EmpRisk{S}{\cdot}$ is bounded.
Then, 
for every $\epsilon' > 0$ and $\delta,\delta' \in (0,1)$,
with probability at least $1-\delta-\delta'$
over $S \dist \Dist^{m}$ and
a sequence $\ww_1,\ww_2,\dots$ (such as produced by SGLD) 
converging in distribution (conditionally on $S$) to an $\epsilon$-differentially private vector $\wdp(S)$,
there exists $N \in \Nats$, such that,
for all $n > N$,
\begin{align*}
\begin{split}%
      &
      \KL{\EmpErr{S}{\Qw{\ww_n}}}{\Err{\Dist}{\Qw{\ww_n}}} 
      \\&\quad \le \frac {\KL{\Qw{\ww_n}}{\Pw{\ww_n}} + \ln 2\sqrt{m} + 
              2 \max \{
                 \ln \frac {3}{\delta}, \ 
                 m \epsilon^2
                       \} 
           }{m} 
           + \epsilon'.
\end{split}           
\end{align*}
\end{theorem}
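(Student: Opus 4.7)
The plan is to combine \cref{DPpacbayes} applied at the limiting prior mean $\wdp(S)$ with a weak-convergence argument that transfers the bound from $\wdp(S)$ to the SGLD iterate $\ww_n$ for all sufficiently large $n$.

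First, observe that because $\ww \mapsto \Normal(\ww,\Sigma)$ is a deterministic (hence post-processing) transformation, the data-dependent prior $\Pw{\wdp(S)}$ is $\epsilon$-differentially private. I would then instantiate \cref{DPpacbayes} with this prior and with the specific posterior $Q = \Qw{\wdp(S)}$, using the bounded cross-entropy surrogate described in \cref{binobjective} to satisfy the hypotheses of \cref{DPpacbayes}. This gives, with probability at least $1-\delta$ over $S \sim \Dist^m$, the inequality
\begin{equation*}
      \KLbin{\EmpErr{S}{\Qw{\wdp(S)}}}{\Err{\Dist}{\Qw{\wdp(S)}}}
       \le B(S,\wdp(S)),
\end{equation*}
where $B(S,\ww)$ denotes the right-hand side of the claimed inequality with $\epsilon' = 0$ and with $\ww_n$ replaced by $\ww$.

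The second step transfers this inequality from $\wdp(S)$ to $\ww_n$ for large $n$. Since $\ww_n \randto \wdp(S)$ in distribution conditionally on $S$, Skorokhod's representation theorem lets me pass to a coupling on a common probability space under which $\ww_n \to \wdp(S)$ almost surely, while preserving all marginal laws and hence all event probabilities of interest. Along this coupling, provided that the three maps $\ww \mapsto \EmpErr{S}{\Qw{\ww}}$, $\ww \mapsto \Err{\Dist}{\Qw{\ww}}$, and $\ww \mapsto \KL{\Qw{\ww}}{\Pw{\ww}}$ are continuous, almost surely there exists $N$ such that each of these functionals evaluated at $\ww_n$ is within any prescribed tolerance of its value at $\wdp(S)$ for every $n > N$. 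Because $\KLbin{\cdot}{\cdot}$ is continuous on the interior of $[0,1]^2$ and the right-hand side of the bound is continuous in the KL term, choosing the tolerance small enough yields the claimed bound at $\ww_n$ with additive slack $\epsilon'$. The $\delta'$ probability budget absorbs any measure-zero exceptional set introduced by the Skorokhod coupling.

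The principal obstacle I expect is verifying continuity of $\ww \mapsto \KL{\Qw{\ww}}{\Pw{\ww}}$, since KL divergence is notoriously badly behaved under weak limits in general. However, because $\Pw{\ww} = \Normal(\ww,\Sigma)$ depends smoothly on $\ww$ and $\Qw{\ww}$ is its Gibbs tilt by the bounded factor $\exp(-\tau \EmpRisk{S}{\cdot})$, one can rewrite $\KL{\Qw{\ww}}{\Pw{\ww}}$---after a change of variables $\ww' \mapsto \ww' - \ww$---as an explicit integral in which the integrand depends continuously on $\ww$ and is dominated by an integrable envelope, so dominated convergence yields continuity. Continuity of the two risk functionals follows similarly from boundedness of the loss and the smooth $\ww$-dependence of the posterior density. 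Assembling these pieces completes the argument.
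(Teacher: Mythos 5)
You should first note that the paper does not actually prove \cref{maindppacthm}: it is imported verbatim from \citet{DR18private} (``we rely on the following result due to \citeauthor{DR18private}''), so the only fair comparison is with the proof in that reference. Your opening move --- observing that $\Pw{\wdp(S)} = \Normal(\wdp(S),\Sigma)$ inherits $\epsilon$-differential privacy from $\wdp(S)$ by post-processing and then instantiating \cref{DPpacbayes} at the limiting mean --- is exactly right, and your continuity analysis of $\ww \mapsto \KL{\Qw{\ww}}{\Pw{\ww}}$ is sound: by Catoni's identity this KL equals $-\tau\,\Qw{\ww}[\EmpRisk{S}{}] - \log \Pw{\ww}[\exp(-\tau \EmpRisk{S}{})]$, and both terms are continuous in $\ww$ by dominated convergence when $\EmpRisk{S}{}$ is bounded; the two risk functionals are handled the same way.

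The gap is in the transfer step. Skorokhod's representation theorem produces, on a \emph{new} probability space, variables $\tilde\ww_1,\tilde\ww_2,\dots,\tilde\ww_{\ast}$ with the same one-dimensional marginals as $\ww_1,\ww_2,\dots,\wdp(S)$ and with $\tilde\ww_n \to \tilde\ww_{\ast}$ almost surely; it does \emph{not} preserve the joint law of the sequence. The event whose probability the theorem asserts --- ``there exists $N$ such that for all $n>N$ the bound holds at $\ww_n$'' --- is a tail event of the sequence, determined by that joint law, so its probability on the Skorokhod space says nothing about its probability on the original space. This is not a measure-zero defect that the $\delta'$ budget can absorb: in general $\Pr\bigl(\liminf_n A_n\bigr)$ can fall arbitrarily far below $\liminf_n \Pr(A_n)$. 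What weak convergence of the marginals \emph{does} give you, via the Portmanteau theorem applied to the set of prior means at which the $\epsilon'$-relaxed bound fails (a closed set, by your continuity arguments), is the per-$n$ statement: for every sufficiently large $n$, the bound at $\ww_n$ holds with probability at least $1-\delta-\delta'$ --- i.e., with the quantifier over $N$ sitting \emph{outside} the probability. That is the form the argument in \citet{DR18private} actually delivers; upgrading it to the uniform-in-$n$ form stated here requires almost sure (or at least joint, in-probability) convergence of the iterates, not merely convergence in distribution. You should either prove the weaker, per-$n$ version or add a hypothesis strong enough to control the tail event.
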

\begin{remark}
\citet{RRT17} give conditions that suffice to imply that SGLD converges in distribution. 
Note that the number of required iterations $N$ of SGLD  may depend on the sample $S$, $\epsilon$, and $\delta$.
See \citep{DR18private} for details and improved bounds.
\end{remark}

In summary, 
we optimize the local entropy $F_{\gamma,\tau}(\cdot; S)$ using SGLD,
repeatedly performing the  update
\begin{equation*}
\textstyle \ww \gets \ww + \frac 1 2 \eta \hat{g}(\ww) + \sqrt{\eta/\beta} \,N(0,I_{p}),
\end{equation*}
where at each round $\hat{g}(\ww)$ is an estimate of the gradient 
$\grad_{\ww} F_{\gamma,\tau}(\ww; S)$. (Recall the identity \cref{derivid}.)
As in Entropy-SGD, we construct biased gradient estimates via an inner loop of SGLD. 
Ignoring error from these biased gradients,
we obtain a data-dependent prior that yields a valid PAC-Bayes bound. 
The only change to Entropy-SGD is the addition of noise in the outer loop. 
We call the resulting algorithm \ESGLD. (See \cref{entropysgldalg}.)

As we run SGLD longer, we obtain a tighter bound that holds with probability no less than some value approaching $1-\delta$. 
In practice we may not know the rate at which this convergence occurs. 
In our experiments, 
we use very long runs to approximate near-convergence and then only interpret the bounds as being optimistic.
We return to these issues in \cref{sec:eval,discussion}.

\begin{figure*}[ht]
\begin{tikzpicture}[]
  
  \begin{scope}
    \node at (0,2.25) {SGD and SGLD};
    \node at (0,0) {\includegraphics[width=.3\linewidth]{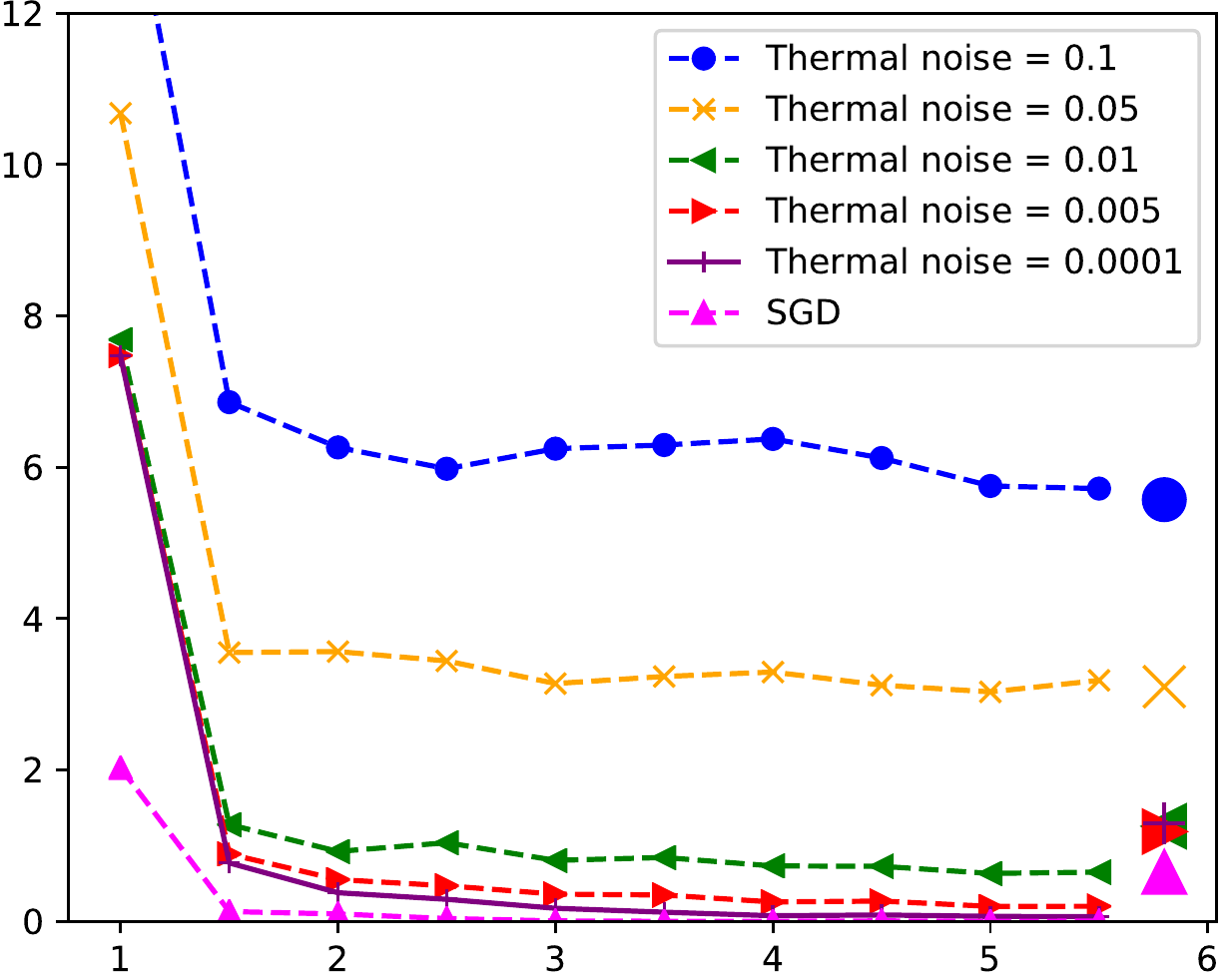}};
    \node[rotate=90] at (-3,0) {\small 0--1 error $\times$ 100};
  \end{scope}
  
  \begin{scope}[xshift=.31\linewidth]
    \node at (0,2.25) {Entropy-SGD};
    \node at (0,0) {\includegraphics[width=.3\linewidth]{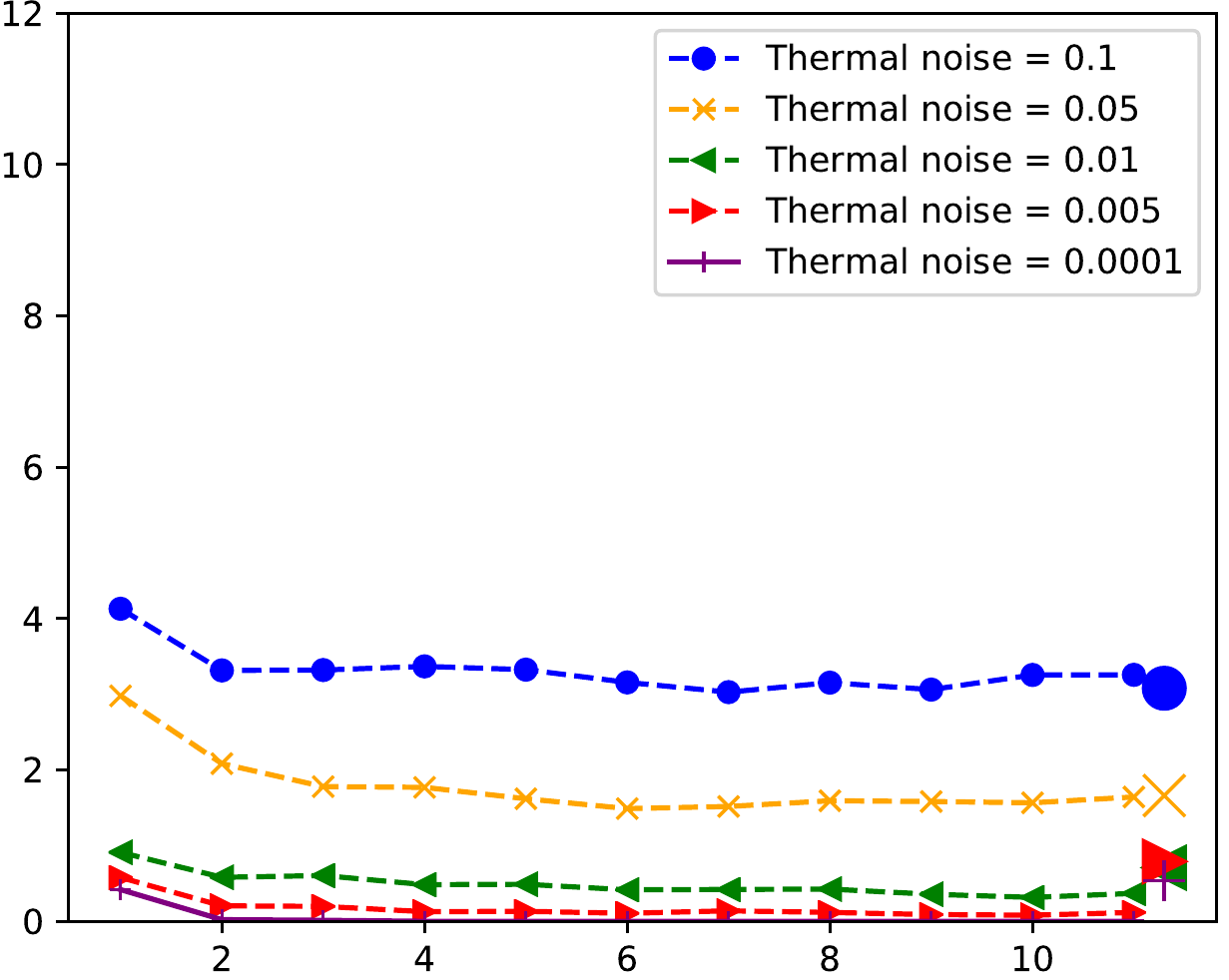}};
  \end{scope}
  
  \begin{scope}[xshift=.62\linewidth]
    \node at (0,2.25) {Entropy-SGLD};
    \node at (0,0) {\includegraphics[width=.3\linewidth]{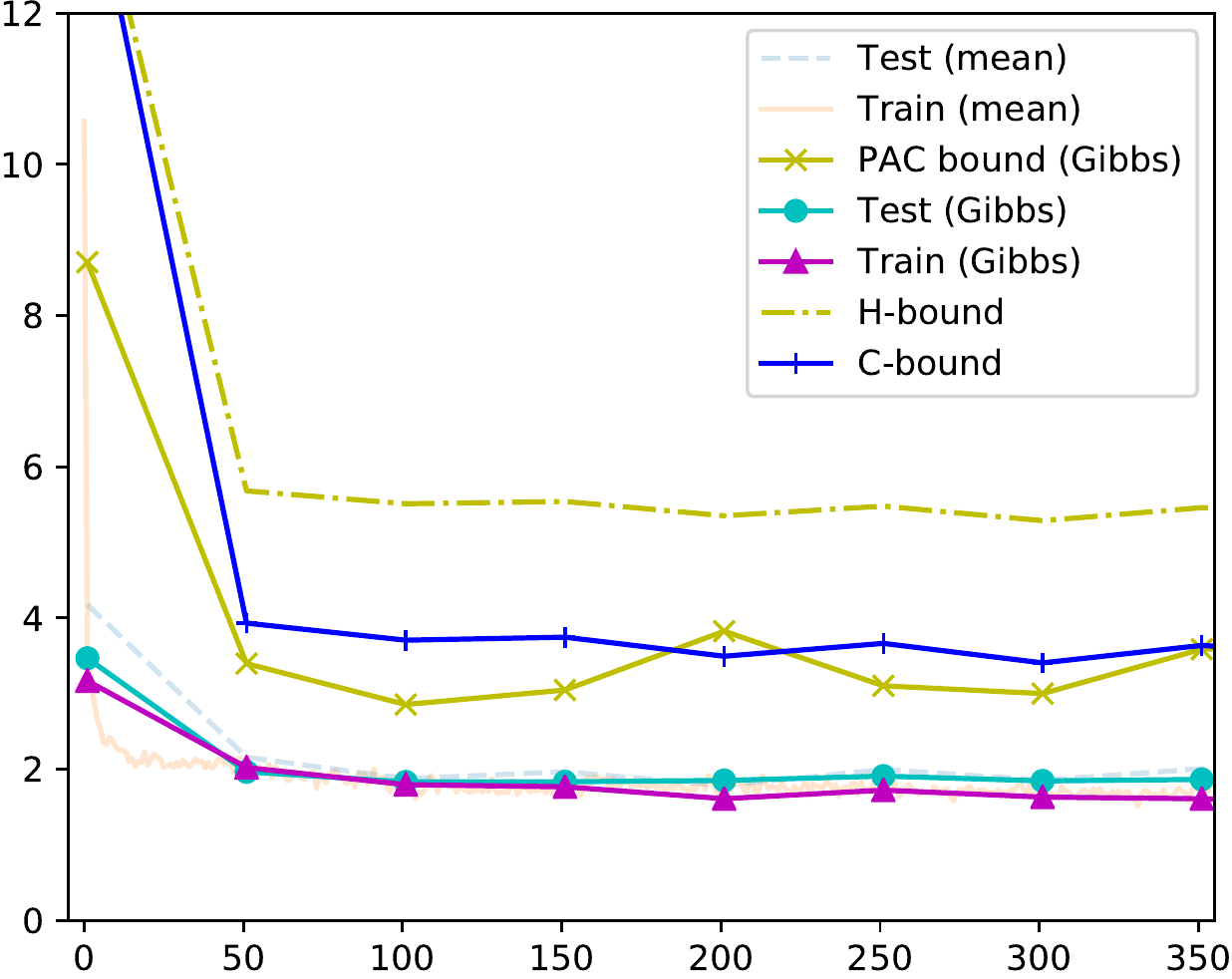}};
    \node[rotate=90] at (2.9,0) {True Labels};
  \end{scope}

  \begin{scope}[yshift=-4.2cm]
  
  \begin{scope}
    \node[rotate=90] at (-3,0) {\small 0--1 error $\times$ 100};
    \node at (0,0) {\includegraphics[width=.3\linewidth]{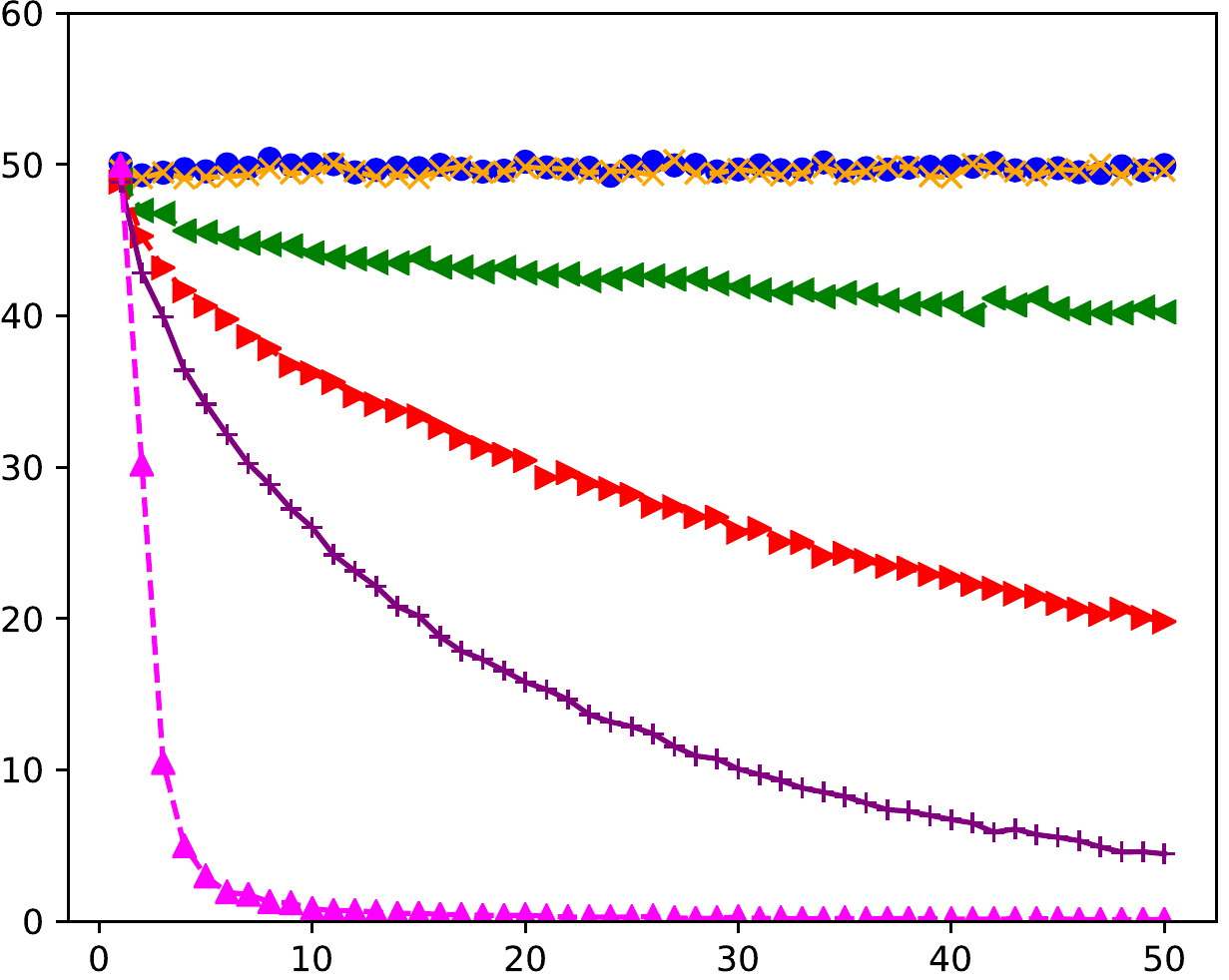}};
    \node at (0,-2.4) {\small \Epochs};
  \end{scope}
  
  \begin{scope}[xshift=.31\linewidth]
    \node at (0,0) {\includegraphics[width=.3\linewidth]{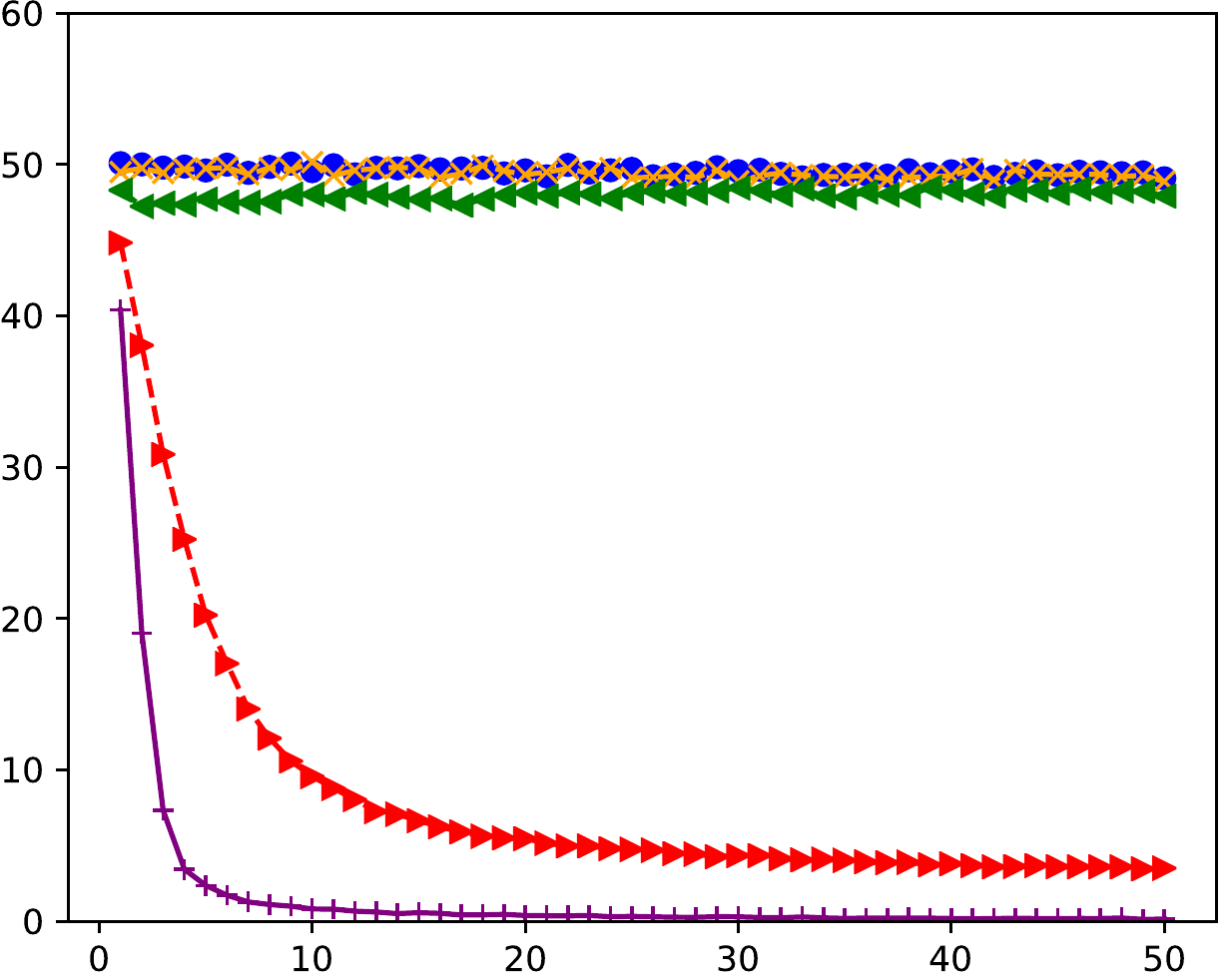}};
    \node at (0,-2.4) {\small \Epochs};
  \end{scope}
   
  \begin{scope}[xshift=.62\linewidth]
    \node at (0,0) {\includegraphics[width=.3\linewidth]{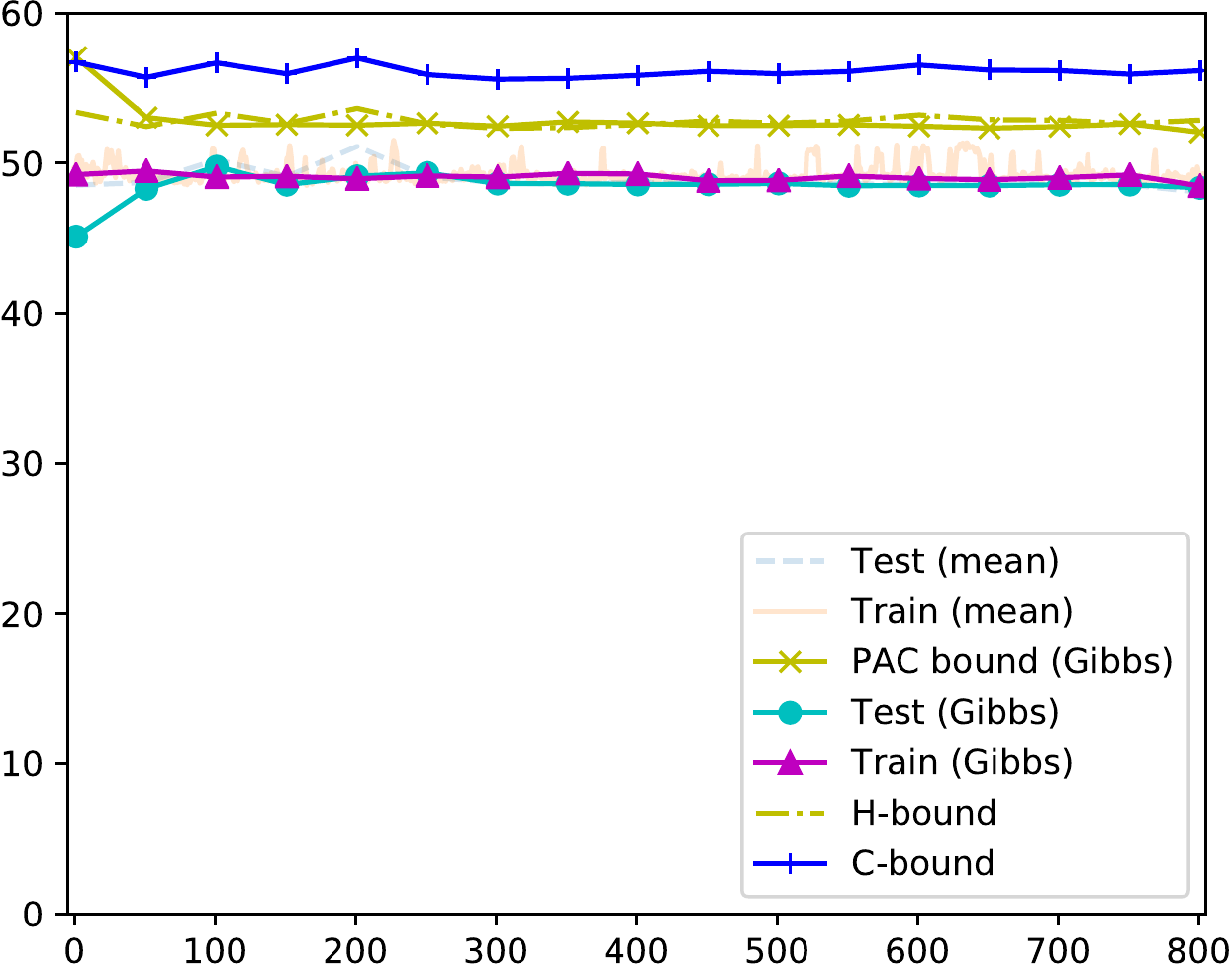}};
    \node[rotate=90] at (2.9,0) {Random Labels};
    \node at (0,-2.4) {\small \Epochs};
  \end{scope}
  
  \end{scope}

\end{tikzpicture}
\caption{
Results on the CONV network on two-class MNIST.
{\bf (left column)} Training error (under 0--1 loss) for SGLD on the empirical risk $-\tau \protect\EmpRisk{S}{}$ under a variety of thermal noise $\sqrt{2/\tau}$ settings. SGD corresponds to zero thermal noise.
{\bf (top-left)} The large markers on the right indicate test error. The gap is an estimate of the generalization error.
On true labels, SGLD finds classifiers with relatively small generalization error. 
At low thermal noise settings, SGLD (and its zero limit, SGD), achieve small empirical risk.
As we increase the thermal noise,
the empirical 0--1 error increases, but the generalization error decreases. At 0.1 thermal noise, risk is close to 50\%.
{\bf (bottom-left)} On random labels, SGLD has high generalization error for thermal noise values 0.01 and below. (True error is $50\%$).  
{\bf (top-middle)} On true labels, Entropy-SGD, like SGD and SGLD, has small generalization error.  
For the same settings of thermal noise, empirical risk is lower.
{\bf (bottom-middle)} On random labels, Entropy-SGD overfits for thermal noise values 0.005 and below. 
Thermal noise 0.01 produces good performance on both true and random labels.
{\bf (right column)} \ESGLD\ is configured to approximately sample from an $\epsilon$-differentially private mechanism with $\epsilon \approx 0.0327$ by setting $\tau = \sqrt{m}$, where $m$ is the number of training samples.
{\bf (top-right)} On true labels, the generalization error for networks learned by \ESGLD\ is close to zero. 
Generalization bounds are relatively tight.
{\bf (bottom-right)} On random label, \ESGLD\ does not overfit. See \cref{sgldconvbounds} for SGLD bounds at same privacy setting.
}
\label{allplots}
\end{figure*}

\section{Numerical evaluations on MNIST}
\label{sec:eval}

PAC-Bayes bounds for \ESGLD{} are data-dependent and so the question of their utility is an empirical one that requires data.
In this section, we perform an empirical study of SGD, SGLD, Entropy-SGD, and \ESGLD{} on the MNIST and CIFAR10 data sets, 
using both convolutional and fully connected architectures, and comparing several numerical generalization bounds to test errors estimated based on held-out data. 

The PAC-Bayes bounds we use depend on the privacy of a sample from the local entropy distribution.
(Bounds for SGLD depend on the privacy of a sample from the Gibbs posterior.)
For the local entropy distribution, the degree $\epsilon$ of privacy is determined by the product of the $\tau$ and $\beta$ parameters of the local entropy distribution. (Thermal noise is $\sqrt {2/\tau}$.)
In turn, $\epsilon$ increases the generalization bound.  
For a fixed $\beta$, theory predicts that $\tau$ affects the degree of overfitting. We see this empirically. 
No bound we compute is violated more frequently than it is expected to be.
The PAC-Bayes bound for SGLD is expanded by an amount $\epsilon'$ that goes to zero as SGLD converges. 
We assume SGLD has converged and so the bounds we plot are optimistic.
We discuss this point below in light of our empirical results, and then return to this point in the discussion.

The weights learned by SGD, SGLD, and Entropy-SGD are treated differently from those learned by Entropy-SGLD.
In the former case, the weights parametrize a neural network as usual, and the training and test error are computed using these weights.
In the latter case, the weights are taken to be the mean of a multivariate normal prior, and we evaluate the training and test error of the associated Gibbs posterior (i.e., a randomized classifier).
We also report the performance of the (deterministic) network parametrized by these weights (the ``mean'' classifier) in order to give a coarse statistic summarizing the local empirical risk surface.

Following \citet{Rethinking17}, we study these algorithms on MNIST with the original (``true'') labels, as well as on random labels. 
Parameter $\tau$ that performs very well in one setting often does not perform well in the other. Random labels mimic data where the Bayes error rate is high, and where overfitting can have severe consequences. 

\subsection{Details}

We use a two-class variant of MNIST \citep{MNIST}.\footnote{
The MNIST handwritten digits dataset \citep{MNIST} consists of 60000 training set images and 10000 test set images, labeled 0--9.
We transformed MNIST to a two-class (i.e., binary) classification task by mapping digits 0--4 to label $1$ and 5--9 to label $-1$.
} 
(Due to space issues, see \cref{multiclassmnist,cifar10exp} for experiments on the standard multiclass MNIST dataset and CIFAR10.)
Some experiments involve random labels, i.e., labels drawn independently and uniformly at random at the start of training.
We study three network architectures, abbreviated FC600, FC1200, and CONV. 
Both FC600 and FC1200 are 3-layer fully connected networks, 
with 600 and 1200 units per hidden layer, respectively.
CONV is a convolutional architecture.
All three network architectures are taken from the MNIST experiments by \citet{CCSL16}, 
but adapted to our two-class version of MNIST.\footnote{
We adapt the code provided by \citeauthor{CCSL16}, with some modifications to the training procedure
and straightforward changes necessary for our binary classification task.
}
Let $S$ and $\Stest$ denote the training and test sets, respectively.
For all learning algorithms we track 
\setlist{nolistsep}
\begin{enumerate}[label=(\roman*),noitemsep]
\item $\EmpErr{S}{\ww}$ and $\EmpErr{\Stest}{\ww}$, i.e., the training/test error for $\ww$.
\end{enumerate}
We also track
\begin{enumerate}[resume*]
\item estimates of $\EmpErr{S}{\LGibbs}$ and $\EmpErr{\Stest}{\LGibbs}$, i.e.,
the mean training and test error of the local Gibbs distribution, viewed as a randomized classifier (``Gibbs'')
\end{enumerate}
and, using the bound stated in \cref{dpGibbspSample}, we compute
\begin{enumerate}[resume*,noitemsep]
\item a PAC-Bayes bound on $\Err{\Dist}{\LGibbs}$ using \cref{DPpacbayes} (``PAC-bound''); 
\item the mean of a Hoeffding-style bound on $\ErrRamp{\Dist}{\ww'}$, where the underlying loss is the ramp loss with slope $10^6$ and $\ww' \sim \LEDist{\!}$,
using the first bound of \cref{onetobounds} (``H-bound''); 
\item an upper bound on the mean of a Chernoff-style bound on $\ErrRamp{\Dist}{\ww'}$, 
where $\ww' \sim \LEDist{\!}$,
using the second bound of \cref{onetobounds} (``C-bound'').
\end{enumerate}
We also compute H- and C- bounds for SGLD, viewed as a sampler for $\ww' \sim P_{\exp\parens{-\tau \EmpRisk{S}{}}}$, where $P$ is Lebesgue measure.

In order to get privacy guarantees for SGLD and \ESGLD, we modify the cross entropy loss function to be bounded following \citet{DR18private}. (See \cref{binobjective}). 
With the choice of $\beta=1$ and $\tau = \sqrt{m}$, and the loss function taking values in an interval of length $\Lmax = 4$, 
the local entropy distribution is an $\epsilon$-differentially private mechanism with $\epsilon \approx 0.0327$. 
See \cref{apphyper} for additional details.
Note that, in the calculation of (iii), we do not account for Monte Carlo error in our estimate of $\EmpErr{S}{\ww}$. 
The effect is small, given the large number of iterations of SGLD performed for each point in the plot.
Recall that
\begin{equation*}
\Err{\Dist}{\LGibbs} = \EEE{\ww' \sim \smash{\LGibbs}} \event { \Err{\Dist}{\ww'}},
\end{equation*}
and so we may interpret the bounds in terms of the performance of a randomized classifier or the mean performance of a randomly chosen classifier.

\subsection{Results}

Key results for the convolutional architecture (CONV) appear in \cref{allplots}. 
Results for FC600 and FC1200 appear in \cref{fcplots} of \cref{expdetails}. (Training the CONV network produces the lowest training/test errors and tightest generalization bounds.
Results and bounds for FC600 are nearly identical to those for FC1200, despite FC1200 having three times as many parameters.)

The left column of \cref{allplots} presents the performance of SGLD for various levels of thermal noise $\sqrt{2/\tau}$ under both true and random labels. 
(Assuming SGLD is close to weak convergence, 
we may also use SGLD to directly perform a private optimization of the empirical risk surface.
The level of thermal noise determines the differential privacy of SGLD's stationary distribution and so we expect to see a tradeoff between empirical risk and generalization error. 
Note that, algorithmically, SGD is SGLD with zero thermal noise.)
SGD achieves the smallest training and test error on true labels, but overfits the worst on random labels.
In comparison, SGLD's generalization performance improves with higher thermal noise, while its risk performance worsens. 
At 0.05 thermal noise, SGLD achieves reasonable but relatively large risk but almost zero generalization error on both true and random labels. Other thermal noise settings have either much worse risk or generalization performance.

The middle column of \cref{allplots} presents the performance of Entropy-SGD for various levels of thermal noise $\sqrt{2/\tau}$ under both true and random labels. 
As with SGD, Entropy-SGD's generalization performance improves with higher thermal noise, while its risk performance worsens. 
At the same levels of thermal noise, Entropy-SGD outperforms the risk and generalization error of SGD.
At 0.01 thermal noise, Entropy-SGD achieves good risk and low generalization error on both true and random labels.
However, the test-set performance of Entropy-SGD at 0.01 thermal noise is still worse than that of SGD.
Whether this difference is due to SGD overfitting to the MNIST test set is unclear and deserves further study.

The right column of \cref{allplots} presents the performance of Entropy-SGLD with $\tau = \sqrt{m}$ on true and random labels. (This corresponds to approximately 0.09 thermal noise.)
On true labels, both the mean and Gibbs classifier learned by Entropy-SGLD have approximately 2\% test error and essentially zero generalization error, which is less than predicted by the bounds evaluated.
The differentially private PAC-Bayes risk bounds are roughly 3\%.
As expected by the theory, Entropy-SGLD, properly tuned, does not overfit on random labels, even after thousands of epochs.

We find that the PAC-Bayes bounds are generally tighter than the H- and C-bounds. 
All bounds are nonvacuous, though still loose.
The error bounds reported here are tighter than those reported by \citet{DR17}.
However, 
\emph{the bounds are optimistic because they do not include the additional term 
which measure how far SGLD is from its weak limit.}
Despite the bounds being optimistically tight, we see almost no violations in the data. (Many violations would undermine our assumption.)
While we observe tighter generalization bounds than previously reported, and better test error, 
we are still far from the performance of SGD. 
The optimistic picture we get from the bounds suggests we need to develop new approaches.
Weaker notions of stability with respect to the training data/privacy may be necessary to achieve further improvement in generalization error and test error.

\section{Discussion}
\label{discussion}

Our work reveals that Entropy-SGD can be understood as optimizing a PAC-Bayes generalization bound in terms of the bound's prior. 
Because the prior must be independent of the data, the bound is invalid, 
and, indeed, we observe overfitting in our experiments with Entropy-SGD 
when the thermal noise $\sqrt{2/\tau}$ is set to 0.0001 as suggested by \citeauthor{CCSL16} for MNIST.

PAC-Bayes priors can, however, depend on the data distribution. 
This flexibility seems wasted, since the data sample is typically viewed as one's only view onto the data distribution.
However, using results combining differential privacy and PAC-Bayes bounds, 
we arrive at an algorithm, \ESGLD, that minimizes its own PAC-Bayes bound (though for a surrogate risk).
\ESGLD{} performs an approximately private computation on the data, extracting information about the underlying distribution, 
without undermining the statistical validity of its PAC-Bayes bound.
The cost of using the data is a looser bound, but the gains in choosing a better prior make up for the loss.
(The gains come from the KL term being much smaller on the account of the prior being better matched to the data-dependent posterior.)

Our bounds based on \cref{maindppacthm} are optimistic because we do not include the $\epsilon'$ term, assuming that SGLD has essentially converged. 
We do not find overt evidence that our approximation is grossly violated, which would be the case if
we saw the test error repeatedly falling outside our confidence intervals.
We believe that it is useful to view the bounds we obtain for \ESGLD{}
as being optimistic and representing the bounds we might be able to achieve
rigorously should there be a major advance in private optimization. 
(No analysis of the privacy of SGLD takes advantage of the fact that it mixes weakly, in part because it's difficult to characterize how much it has converged in any real-world setting after a finite number of steps.) 
On the account of using private data-dependent priors (and making optimistic assumptions), 
the bounds we observe for \ESGLD{} 
are significantly tighter than those reported by \citet{DR17}.
However, despite our bounds potentially being optimistic, 
the test set error we are able to achieve is still 5--10 times worse than that of SGD.
Differential privacy may be too conservative for our purposes,
leading us to underfit. 
We are able to achieve good generalization on both true and random labels under 0.01 thermal noise,
despite this value of noise being too large for tight bounds.
Identifying the appropriate notion of privacy/stability to combine with PAC-Bayes bounds is an important problem.

Despite \ESGLD\ having much stronger generalization guarantees,
\ESGLD\ learns much more slowly than Entropy-SGD, 
the test error of \ESGLD\ is far from state of the art, 
and the  PAC-Bayes bounds, while much tighter than existing bounds, are still quite loose.
It seems possible that we may be facing a fundamental tradeoff between the speed of learning, 
the excess risk, 
and the ability to produce a certificate of one's generalization error via a rigorous bound.
Characterizing the relationship between these quantities is an important open problem.

\paragraph{Acknowledgments}

This research was carried out in part while the authors were visiting the Simons Institute for the Theory of Computing at UC Berkeley.
The authors would like to thank 
Pratik Chaudhari,
Pascal Germain,
David McAllester,
and
Stefano Soatto
for helpful discussions.
GKD is supported by an EPSRC studentship.  
DMR is supported by an
NSERC Discovery Grant, Connaught Award, Ontario Early Researcher Award, and U.S. Air Force Office of Scientific
Research grant \#FA9550-15-1-0074.

\printbibliography

\clearpage
\appendix

\section{Maximizing local entropy minimizes a PAC-Bayes bound}
\label{localpacbayes}

\begin{proof}[Proof of \cref{optprior}]
Let $m$, $\delta$, $\Dist$, and $P$ be as in \cref{pacbayeslinear}
and let $S \sim \Dist^m$. 
The linear PAC-Bayes bound (\cref{pacbayeslinear})
ensures that for any fixed $\lambda>1/2$ and bounded loss function, 
with probability at least $1-\delta$ over the choice of $S$,
the bound
\begin{equation*}
\left( 1 - \frac{1}{2\lambda} \right)  \frac{m\,\Risk{\Dist}{Q}}{\lambda \Lmax}  
\le \frac{m\, \EmpRisk{S}{Q}}{\lambda \Lmax} + \KL{Q}{P} + g(\delta).
\end{equation*}
holds for all $Q \in \ProbMeasures{\HH}$.
Minimizing the upper bound on 
 $\Risk{\Dist}{Q}$ 
is equivalent to the problem
\begin{equation}\label{pbinf}
\textstyle \inf_{Q}   \  Q[\rr] + \KL{Q}{P}
\end{equation}
with $\rr (h) = \frac{m}{\lambda \Lmax}\,\EmpRisk{S}{h}$. 
By \citep[][Lem.~1.1.3]{Catoni}, for all 
 $Q\in \ProbMeasures{\HH}$ with $\KL{Q}{P} < \infty$,
\begin{equation}\label{eqcatoni}
- \log P[\exp(-\rr)] = Q[\rr] + \KL{Q}{P} - \KL{Q}{P_{\exp(-\rr)}}.
\end{equation}
Using \cref{eqcatoni}, we may reexpress \cref{pbinf} as
\begin{equation*}
\textstyle \inf_{Q}  \ \KL{Q}{P_{\exp(-\rr)}} - \log P[\exp(-\rr)] .
\end{equation*}
By the nonnegativity of the Kullback--Liebler divergence, 
the infimum is achieved when the KL term is zero, 
i.e., when $Q = P_{\exp(-\rr)}$.
Then
\begin{equation*}
\left( 1- \frac{1}{2 \lambda}  \right) \frac{m}{\lambda \Lmax} \Risk{\Dist}{P_{\exp(-\rr)}}
\le - \log P[\exp(-\rr)] + g(\delta).
\end{equation*}
Finally, it is plain to see that $F_{\gamma,\tau}(\ww;S) = C + \log P[\exp(-\rr)]$
when $C = \frac 1 2 p \log \parens {2\pi (\tau\gamma)^{-1} }$ is a constant, $\tau = \frac{m}{\lambda \Lmax}$, and $P = \Normal(\ww, (\tau\gamma)^{-1} I_{p})$ is a multivariate normal with mean $\ww$ 
and covariance matrix $(\tau\gamma)^{-1} I$. 
\end{proof}

\section{Privacy of local entropy distribution}
\label{localentdist}
\begin{proof}[Proof of \cref{dpGibbspSample}]
The result follows immediately from the following two lemmas.
\end{proof}

\begin{lemma}[{\citealt[Thm.~6]{ExpRelease07}}]\label{exprelease}
Let $ q : Z^m \times \HH \to \Reals$ be measurable, let $P$ be a measure on $\HH$,
let $\beta > 0$,
and assume $P[\exp \parens {-\beta\, q(S,\cdot)}] < \infty$ for all $S \in Z^m$.
Let $\Delta q \defas \sup_{S,S'} \sup_{\ww \in \HH} | q(S,\ww) - q(S',\ww) |$, 
where the first supremum ranges over pairs $S,S' \in Z^m$ that disagree on no more than one coordinate.
Let $\Alg : Z^m \randto \HH$, on input $S \in Z^m$,
output a sample from the Gibbs distribution $P_{\exp \parens {-\beta q(S,\cdot)}}$.
Then $\Alg$ is $2\beta \Delta q$-differentially private.
\end{lemma}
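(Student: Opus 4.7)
The plan is to verify the definition of $\epsilon$-differential privacy head-on: I will show that for every pair of datasets $S,S' \in Z^m$ differing in at most one coordinate and every measurable $A \subseteq \HH$, the distributions of $\Alg(S)$ and $\Alg(S')$ satisfy $\Pr[\Alg(S) \in A] \le e^{2\beta \Delta q}\, \Pr[\Alg(S') \in A]$. Since both output distributions are, by construction, absolutely continuous with respect to the common base measure $P$, the natural route is to bound the pointwise ratio of their Radon--Nikodym derivatives with respect to $P$ and then integrate over $A$.

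First I will write down the density explicitly: by the definition of $P_{\exp(-\beta q(S,\cdot))}$, the Radon--Nikodym derivative of $\Alg(S)$ with respect to $P$ at $\ww \in \HH$ is $\exp(-\beta q(S,\ww))/Z(S)$, where $Z(S) \defas P[\exp(-\beta q(S,\cdot))] \in (0,\infty)$ by the finiteness assumption (and positivity, which follows because $q$ is real-valued so the integrand is strictly positive). The pointwise ratio of the two densities therefore factors as
\[
\frac{\exp(-\beta q(S,\ww))}{\exp(-\beta q(S',\ww))} \cdot \frac{Z(S')}{Z(S)}.
\]
Each factor will then be bounded separately using the sensitivity $\Delta q$.

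For the first factor, the hypothesis $|q(S,\ww) - q(S',\ww)| \le \Delta q$ (which holds for every $\ww$, since $S,S'$ are a neighboring pair) immediately gives the bound $e^{\beta \Delta q}$. For the second factor, I apply the same pointwise inequality inside the integral defining $Z$: for every $\ww' \in \HH$, $\exp(-\beta q(S',\ww')) \le e^{\beta \Delta q} \exp(-\beta q(S,\ww'))$, so integrating against $P$ preserves the bound, yielding $Z(S')/Z(S) \le e^{\beta \Delta q}$. Multiplying the two estimates shows the density ratio is bounded by $e^{2\beta \Delta q}$ everywhere, and integrating this uniform bound over $A$ against $P$ gives $\Pr[\Alg(S) \in A] \le e^{2\beta \Delta q}\,\Pr[\Alg(S') \in A]$, which is the definition of $2\beta \Delta q$-differential privacy.

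There is no serious obstacle here; the argument is essentially the standard McSherry--Talwar calculation, adapted to a general base measure $P$ on $\HH$ rather than counting measure. The only items requiring a line of care are measurability of $\ww \mapsto \exp(-\beta q(S,\ww))$ (inherited from measurability of $q$) and the observation that swapping the roles of $S$ and $S'$ in the argument gives the matching lower bound, so that the single bound $e^{2\beta \Delta q}$ on the density ratio suffices to establish differential privacy in both directions.
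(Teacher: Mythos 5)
Your argument is correct and complete: the decomposition of the density ratio into the pointwise factor $\exp(-\beta q(S,\ww))/\exp(-\beta q(S',\ww))$ and the normalization factor $Z(S')/Z(S)$, each bounded by $e^{\beta \Delta q}$ via the sensitivity, followed by integration over $A$, is exactly the standard exponential-mechanism calculation. The paper does not prove this lemma itself --- it imports it verbatim from \citet[Thm.~6]{ExpRelease07} --- and your proof is precisely the argument of that cited source, generalized (as the paper's statement requires) to an arbitrary base measure $P$ on $\HH$; nothing further is needed.
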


\begin{lemma}\label{boundedlemma}
Let $F_{\gamma,\tau}(\ww; S)$ be defined as \cref{localentdef},
assume the range of the loss is contained in an interval of length $\Lmax$,
and define $q(S,\ww) = - F_{\gamma,\tau}(\ww;S)$.
Then 
$\Delta q \defas \sup_{S,S'} \sup_{\ww \in \HH} | q(S,h) - q(S',h) | \le \frac{\Lmax \tau}{m}$. 
\end{lemma}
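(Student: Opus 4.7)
\medskip

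\noindent\textbf{Proof plan for \cref{boundedlemma}.}

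The plan is to reduce the sensitivity of the log-partition function $q(S,\ww) = -F_{\gamma,\tau}(\ww;S)$ to the pointwise sensitivity of the empirical risk $\EmpRisk{S}{\ww'}$ in the exponent, then integrate out. First I would fix an arbitrary $\ww \in \HH$ and two neighboring datasets $S, S' \in Z^m$ differing in exactly one coordinate. Since the loss takes values in an interval of length $\Lmax$ and $\EmpRisk{S}{\ww'}$ is the average of the losses over $m$ points, changing a single point changes one summand by at most $\Lmax$, so
\[
\sup_{\ww' \in \HH} \abs{\EmpRisk{S}{\ww'} - \EmpRisk{S'}{\ww'}} \le \frac{\Lmax}{m}.
\]

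Next I would use this pointwise bound inside the integrand defining $F_{\gamma,\tau}$. Writing $\LGibbsDen^{S}(\ww') = \exp(-\tau \EmpRisk{S}{\ww'} - \tau\frac{\gamma}{2}\|\ww'-\ww\|^2)$ and similarly $\LGibbsDen^{S'}$, the displayed bound gives
\[
\LGibbsDen^{S}(\ww') \le e^{\tau \Lmax / m}\, \LGibbsDen^{S'}(\ww')
\quad\text{for every } \ww' \in \HH,
\]
since the Gaussian penalty term is identical for $S$ and $S'$. Integrating over $\ww'$ preserves the inequality, and taking logarithms yields
\[
F_{\gamma,\tau}(\ww;S) - F_{\gamma,\tau}(\ww;S') \le \frac{\Lmax \tau}{m}.
\]
By symmetry (swapping the roles of $S$ and $S'$) the absolute value is bounded by the same quantity, and since $\ww$ and the neighboring pair $(S,S')$ were arbitrary, taking suprema gives $\Delta q \le \Lmax \tau / m$.

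I do not expect any real obstacle: the only thing to be careful about is that the quadratic regularization term $\frac{\gamma}{2}\|\ww'-\ww\|^2$ does not depend on the data, so it factors out identically on both sides and plays no role in the sensitivity computation. The whole argument is essentially the observation that the log-partition function of a Gibbs measure inherits the $L^\infty$-sensitivity of its energy, scaled by the inverse temperature $\tau$.
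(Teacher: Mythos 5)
Your argument is correct and is exactly the one the paper intends: the paper's proof of this lemma simply defers to the sensitivity argument in the exponential-mechanism paper of McSherry and Talwar (their Thm.~6), which is precisely the chain you spell out — pointwise sensitivity $\Lmax/m$ of the empirical risk, the resulting multiplicative bound $e^{\tau\Lmax/m}$ on the integrand (with the data-independent Gaussian term cancelling), and monotonicity of the integral and logarithm plus symmetry in $S,S'$. Your write-up is in fact more explicit than the paper's one-line citation.
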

\begin{proof}
The proof essentially mirrors that of \citep[Thm.~6]{ExpRelease07}.
\end{proof}

\section{Two-class MNIST experiments}
\label{expdetails}

\subsection{Architecture} 
\label{architecture}

We study three architectures: CONV, FC600, and FC1200.

CONV is a convolutional neural network, 
whose architecture is the same as that used by \citet{CCSL16} for multiclass MNIST classification, except modified to produce a single probability output for our two-class variant of MNIST.
In particular, CONV has two convolutional layers, a fully connected ReLU layer, and a sigmoidal output layer, 
yielding $126,711$ parameters in total.

FC600 and FC1200 are fully connected 3-layer neural networks,
with 600 and 1200 hidden units, respectively,
yielding $834,601$ and $2,385,185$ parameters in total, respectively.
We use ReLU activations for all but the last layer, which was sigmoidal to produce an output in $[0,1]$.

In their MNIST experiments, \citet{CCSL16} use dropout and batch normalization. 
We do not use dropout.
The bounds we achieve with and without batch norm are very similar. 
Without batch norm, however, it is necessary to tune the learning rates.
Understanding the combination of SGLD and batch norm and the limiting invariant distribution, if any, is an important open problem.

\subsection{Training objective and hyperparameters for \ESGLD} 
\label{apphyper}

\begin{figure*}[ht]
\centering
\begin{tikzpicture}[]
  
  \begin{scope}[xshift=.1\linewidth]
  \begin{scope}
    \node at (0,2.8) {FC600};
    \node[rotate=90] at (-3.5,0) {\small 0--1 error $\times$ 100};
    \node at (0,-2.9) {\small \Epochs};
    \node at (0,0) {\includegraphics[width=.38\linewidth]{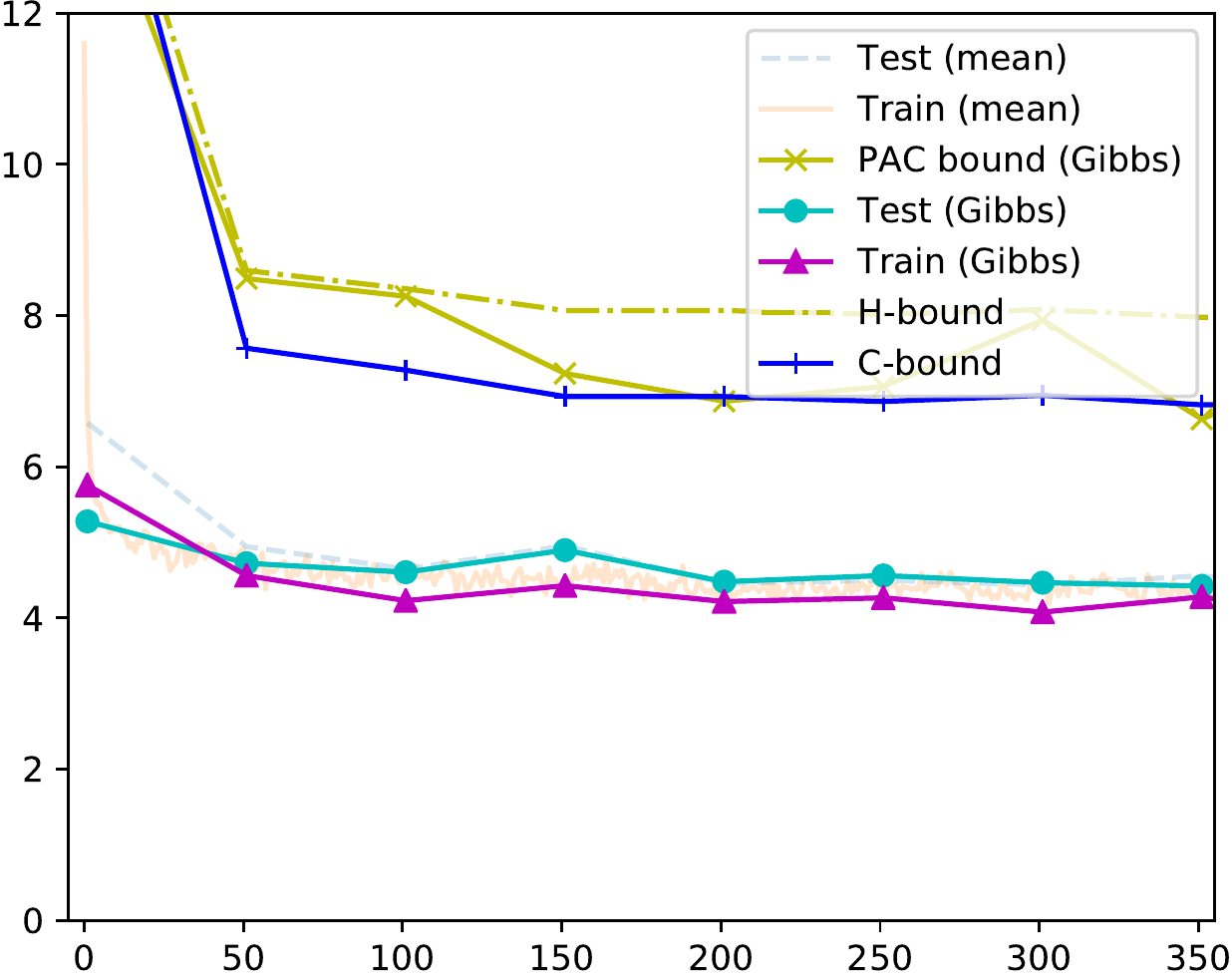}};
  \end{scope}
  \begin{scope}[xshift=.47\linewidth]
    \node at (0,0) {\includegraphics[width=.38\linewidth]{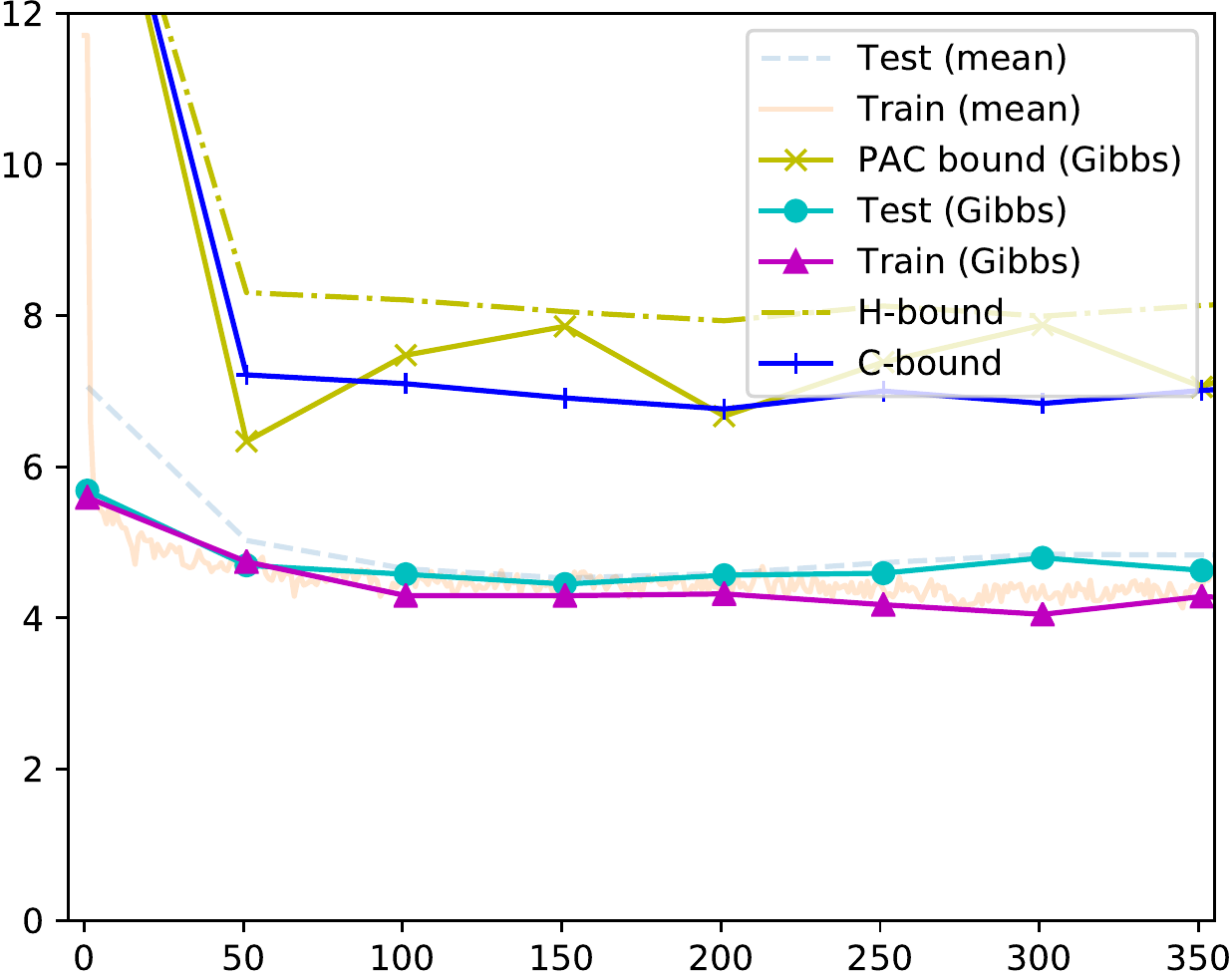}};
    \node at (0,2.8) {FC1200};
    \node[rotate=-90] at (3.5,0) {True Labels};
    \node at (0,-2.9) {\small \Epochs};
  \end{scope}
  \end{scope}
\end{tikzpicture}
\caption{
Fully connected networks trained on binarized MNIST with a differentially private \ESGLD\ algorithm.
{\bf (left)} \ESGLD\ applied to FC600 network trained on true labels. 
{\bf (right)} \ESGLD\ applied to FC1200 network trained on true labels.
Both training error and generalization error are similar for both network architectures. 
The true generalization gap is close to zero, since the test and train error overlaps. 
All the computed bounds on the test error are loose but nonvacuous.}
\label{fcplots}
\begin{tikzpicture}[]
  
  \begin{scope}[xshift=.1\linewidth]
  \begin{scope}
    \node at (0,0) {\includegraphics[width=.38\linewidth]{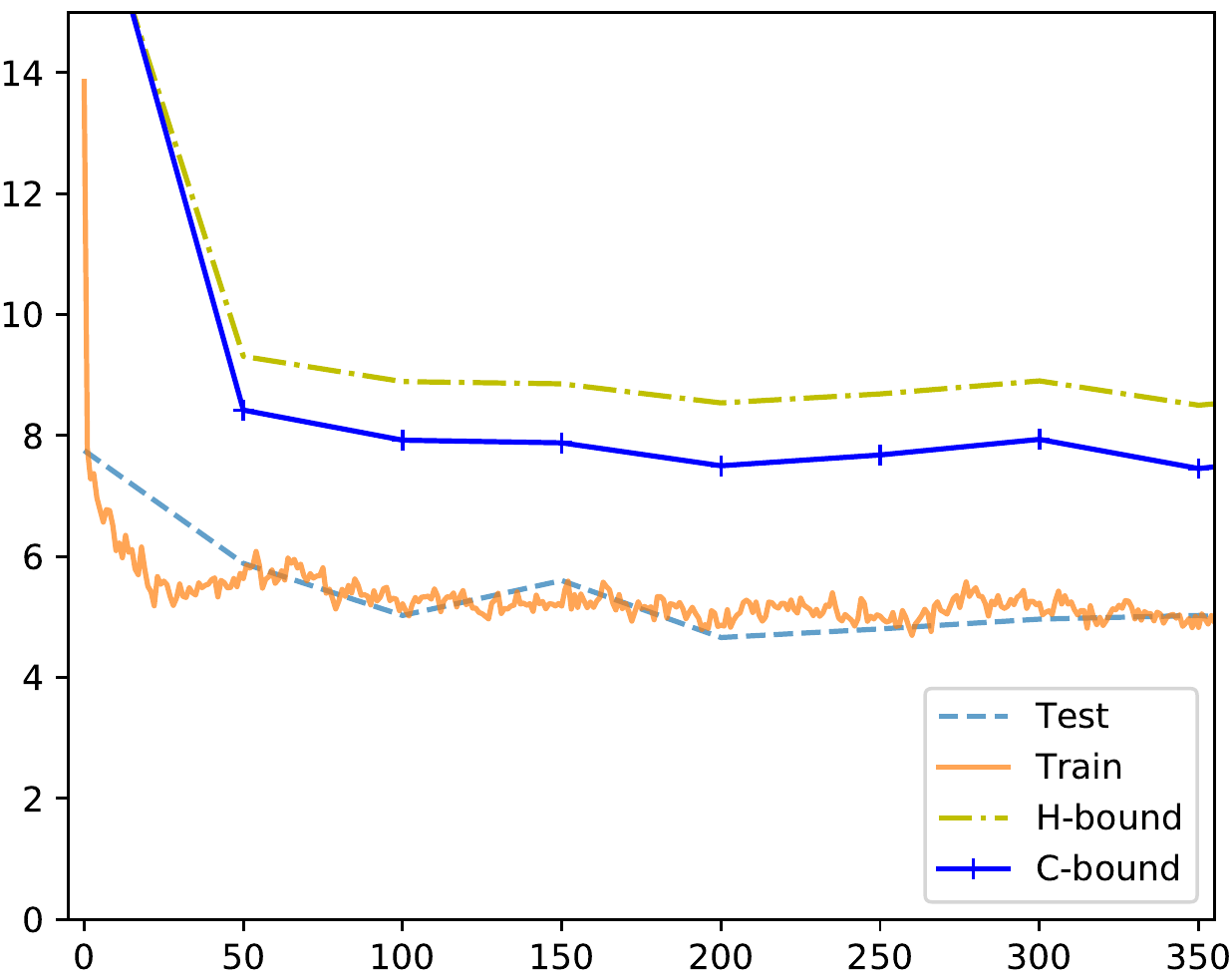}};
    \node[rotate=90] at (-3.5,0) {\small 0--1 error $\times$ 100};
    \node at (0,-2.9) {\small Epochs};
  \end{scope}
  \begin{scope}[xshift=.47\linewidth]
    \node at (0,0) {\includegraphics[width=.38\linewidth]{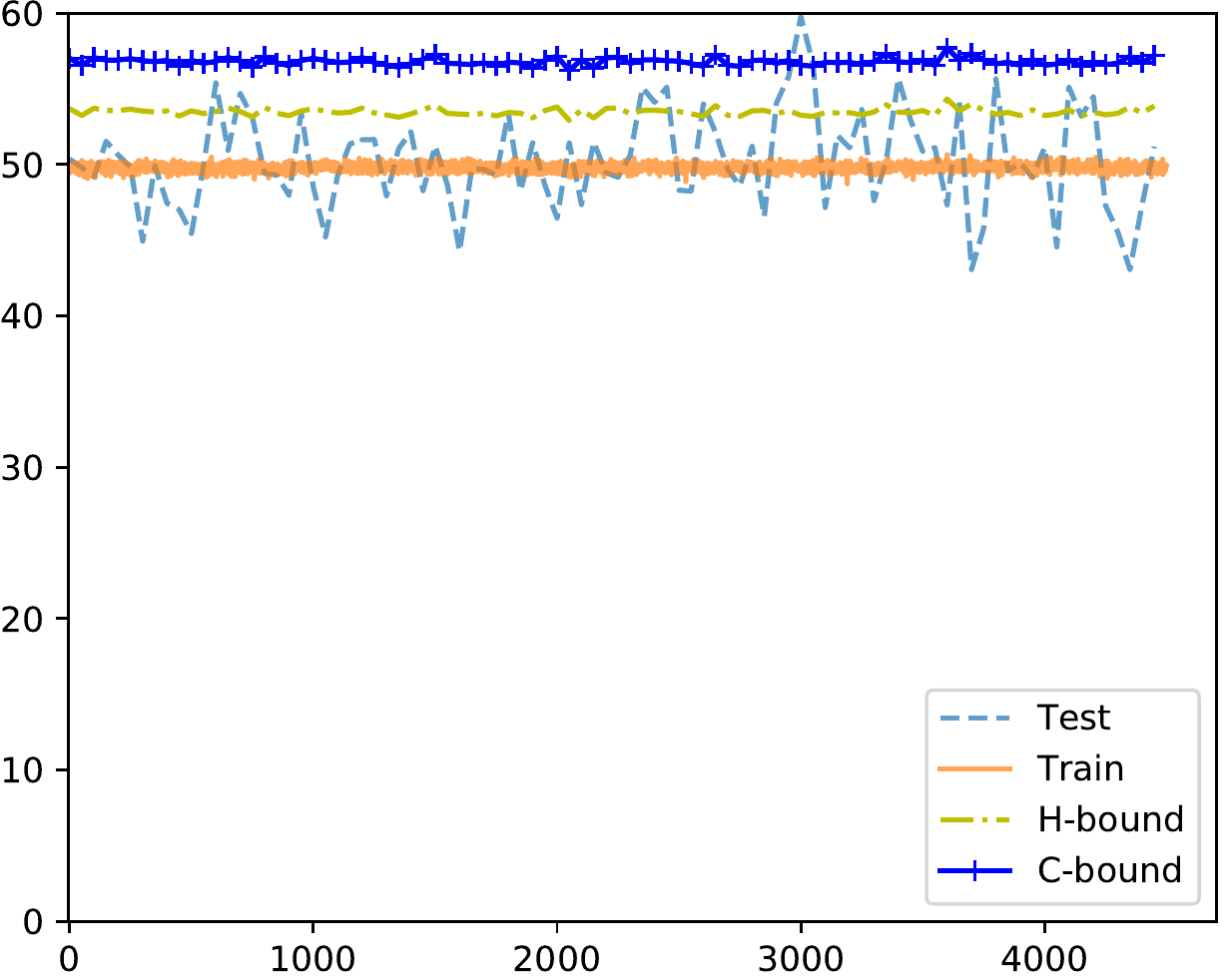}};
    \node[rotate=-90] at (3.5,0) {SGLD};
    \node at (0,-2.9) {\small Epochs};
  \end{scope}
  \end{scope}  
\end{tikzpicture}
\caption{
Results on CONV architecture, 
running SGLD configured to have the same differential privacy as \ESGLD\ with $\tau = \sqrt{m}$.
On true labels, SGLD learns a network with approximately $3\%$ higher training and test error than the mean and Gibbs networks learned by \ESGLD.
SGLD does not overfit on random labels, as predicted by theory.
The C-bound on the true error of this network is around $8\%$, which is worse than the roughly $4\%$ C-bound on the mean classifier. 
}
\label{sgldconvbounds}
\end{figure*}

\subsubsection{Objective} \label{binobjective}

All networks are trained to minimize a bounded variant of empirical cross entropy loss.
In the binary classification setting,
the output of the network is an element $p\in[0,1]$, i.e., the probability the label is one.
The binary cross entropy, $\lossbce$, 
is then $-y \log(p) - (1-y) \log(1-p) $.

\citet{DR18private} make the following change to produce a bounded loss involves replacing 
$ -y \log(p) - (1-y) \log(1-p) $
with
$-y \log \psi(p) - (1-y) \log(1-\psi(p)) $, where
\begin{equation}\label{affine}
\psi(p) = e^{-\Lmax} + ( 1- 2 \e^{-\Lmax}) p
\end{equation}
is an affine transformation that maps $[0,1]$ to $[e^{-\Lmax},1-e^{-\Lmax}]$, thereby removing extreme probability values.
As a result, the binary cross entropy loss $\lossbce$ is contained in the interval $[0,\Lmax]$.
We take $\Lmax = 4$ in our experiments. 

\subsubsection{Epochs}

Ordinarily, an epoch implies one pass through the entire data set.
For SGD, each stochastic gradient step processes a minibatch of size $K=128$.
Therefore, an epoch is $m/K=468$ steps of SGD.
An epoch for Entropy-SGD and Entropy-SGLD is defined as follows: each iteration of the inner SGLD loop processes a minibatch of size $K=128$, and the inner loop runs for $L=20$ steps.
Therefore, an epoch is $m/(LK)$ steps of the outer loop.
In concrete terms, there are $20$ steps of SGD per every one step of Entropy-SG(L)D.
Concretely, the x-axis of our plots measure epochs divided by $L$. This choice, used also by \citet{CCSL16}, ensures that the wall-clock time of Entropy-SG(L)D and SGD align.

\subsubsection{SGLD parameters: step sizes and weighted averages}

The step sizes for SGLD must be square summable but not summable.
The step sizes for the outer SGLD loop are of the form $\eta_t = \eta t^{-0.6}$, with $\eta = \frac{\eta' }{ \gamma \tau}$, where $\eta' = 0.006$ and is called the base learning rate.
The step sizes for the inner SGLD loop are of the form $\eta_t = \eta t^{-1}$, with $\eta = \frac{2}{\tau}$.

The estimate produced by the inner SGLD loop is computed using a weighted average (line 8) with $\alpha = 0.75$.
We use SGLD again when computing the PAC-Bayes generalization bound (\cref{estKL}).  
In this case, SGLD is used to sample from the local Gibbs distribution when estimating the Gibbs risk and the KL term. We run SGLD for 1000 epochs to obtain our estimate.
Again, we use weighted averages, but with $\alpha = 0.005$, in order to average over a larger number of samples and better control the variance.

\subsubsection{Gibbs classifier parameters}

We set $\gamma = 1$ and $\tau = \sqrt{m}$  and keep the values fixed during optimization. 
By \cref{dpGibbspSample}, the value of $\tau$, $\Lmax$, and $\beta$ determine the differential privacy of sampling once from the local entropy distribution, which in turn affects the PAC-Bayes bounds for \ESGLD. 
The differential privacy parameter $\epsilon$ and confidence parameter $\delta$ contribute
\[
 2 \frac{\max \{
                 \ln \frac {3}{\delta}, \ 
                 m \epsilon^2 
                       \} }{m} 
\]
to the bound on the KL-generalization error $\KLbin{\EmpErr{S}{Q}}{\Err{\Dist}{Q}}$
in the differentially private PAC-Bayes bound (\cref{maindppacthm}).
Choosing $\tau = \sqrt{m}$, implies that the contribution coming from differential privacy decays at a rate of $1/m$.
Numerically, given $\Lmax = 4$ and $\beta = 1$, this contribution is $0.002$.

\subsection{Evaluating the PAC-Bayes bound}

\subsubsection{Inverting $\KLbin{q}{p}$}

In order to bound the risk using the differentially private PAC-Bayes bound,
we must compute the largest value $p$ such that $\KLbin{q}{p} \le c$. 
There does not appear to be a simple formula for this value.
In practice, however, the value can be efficiently numerically approximated using, e.g., Newton's method. 
See \citep[\S2.2~and~App.~B]{DR17}.

\subsubsection{Estimating the KL divergence}
\label{estKL}

Let $\ell(\ww) = \tau\,\SurEmpRisk{S}{\ww}$.
By \citep[][Lem.~1.1.3]{Catoni}, 
\[
\KL{P_{\exp(-\ell)}}{P}
= \EEE{\ww \sim P_{\exp(-\ell)}} \!\!\!\!\!\brackets { - \ell (\ww) }
        - \log P[\exp(-\ell)].
\]
\citet{DR18private} make use of this to propose the two following Monte Carlo estimates:
\[\label{elossundergibbs}
\EEE{\ww \sim P_{\exp(-\ell)}} \!\!\!\!\!\brackets { - \ell(\ww) }
\approx -  \frac 1 {k'} \sum_{i=1}^{k'} \ell(\ww')
\]
where $\ww'_1,\dots,\ww'_{k'}$ are taken from a Markov chain targeting $P_{\exp(-\ell)}$, 
such as SGLD run for $k'\gg1$ steps (which is how we computed our bounds),
and
\[
\log P[\exp(-\ell)] 
&= \log \int \exp \{ - \ell(\ww) \} \, P(\dee \ww) \\
&\gtrapprox \log \frac 1 k \sum_{i=1}^k \exp \{ - \ell(\ww_i) \}.
\]
where $h_1,\dots,h_k$ are i.i.d.\ $P$ (which is a multivariate Gaussian in this case).
In the latter case, due to the concavity of $\log$, the estimate is a lower bound with high probability, yielding a high probability upper bound on the KL term.

\section{Multiclass MNIST experiments} 
\label{multiclassmnist}

We evaluate the same generalization bounds on the standard MNIST classification task as in the MNIST binary labelling case. 
The results are presented in \cref{multiclass}.

All the details of the network architectures and parameters are as stated in \cref{apphyper}, with two exception: 
following \citet{CCSL16}, we use a fully connected network with 1024 hidden units per layer, denoted FC1024.

\subsection{Objective}
\label{multiclassobjective}

The neural network produces a probability vector $(p_1,\dots,p_K)$ via a soft-max operation. 
Ordinarily, we then apply the cross entropy loss 
$-\log p_{y}$.
When training privately, we use a bounded variant of the cross entropy loss,
$ -\log \psi(p_{y})$, 
where $\psi$ is defined as in \cref{affine}.

\begin{figure*}[ht]
\centering
\begin{tikzpicture}

  \begin{scope}
    \node at (0,2.8) {True Labels};
    \node[rotate=90] at (-3.5,0) {\small 0--1 error $\times$ 100};
    \node at (0,0) {\includegraphics[width=.38\linewidth]{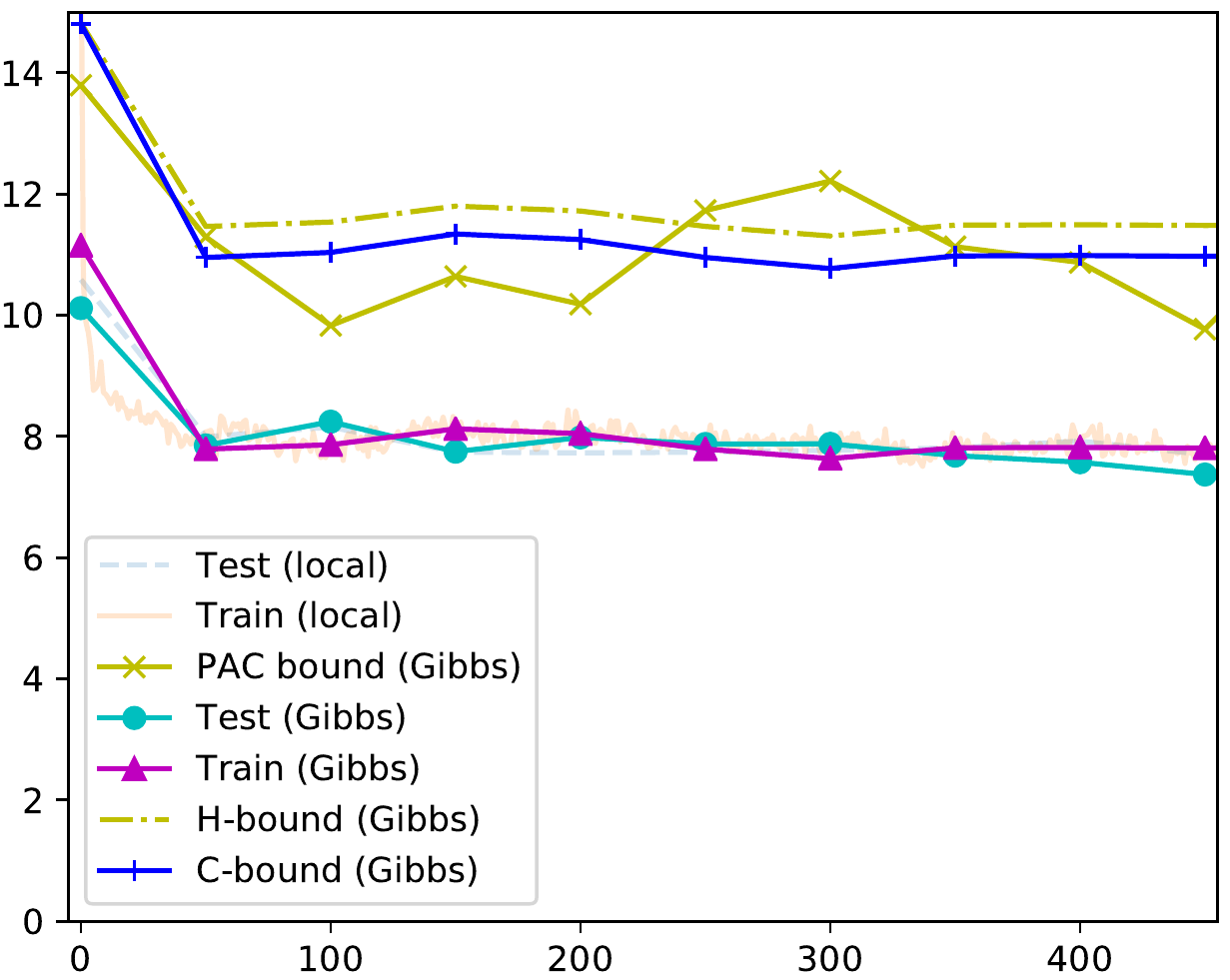}};
  \end{scope}
  \begin{scope}[xshift=.47\linewidth]
    \node at (0,2.8) {Random Labels};
    \node at (0,0) {\includegraphics[width=.38\linewidth]{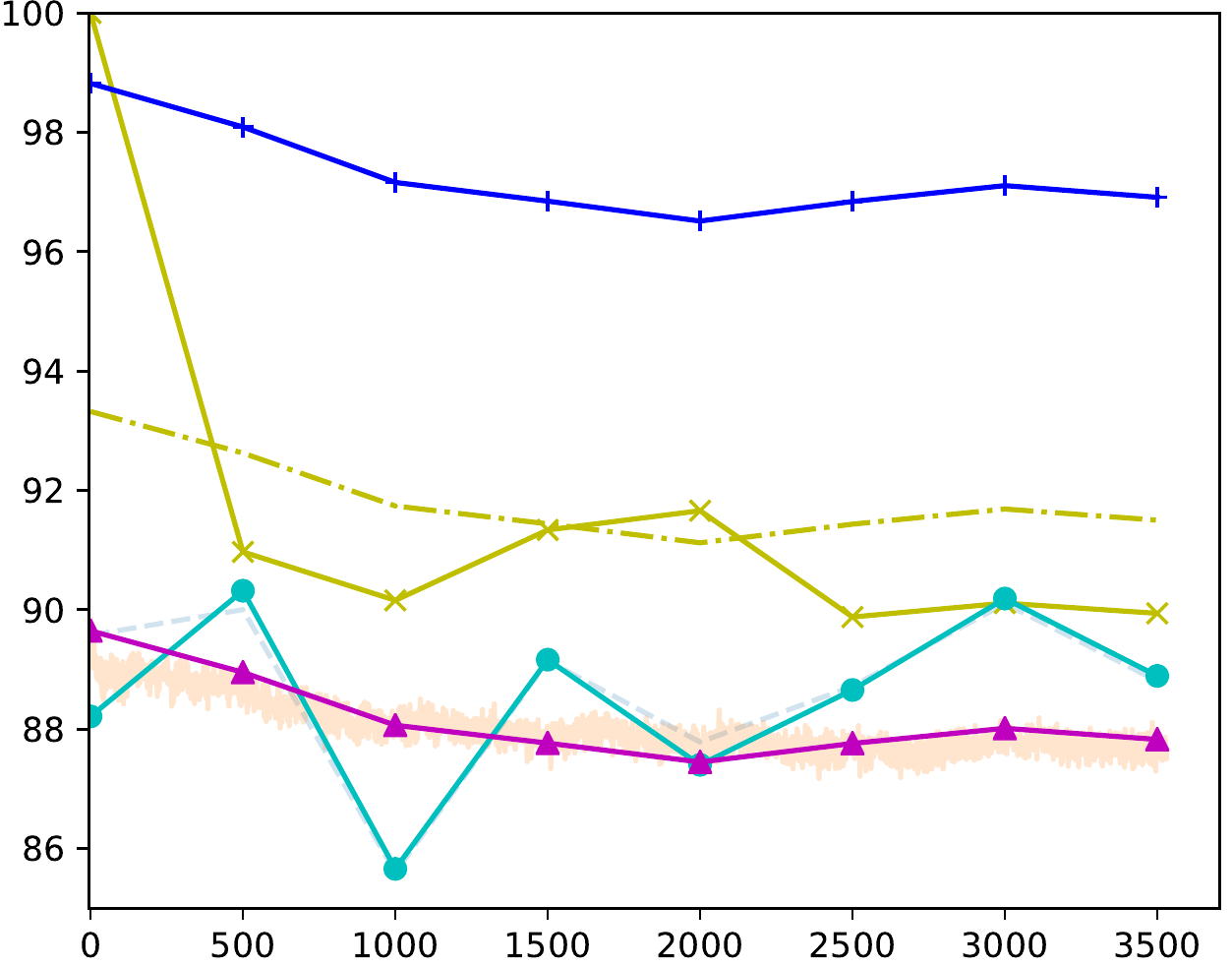}};
  \end{scope}
  \begin{scope}[yshift=-5.25cm]
  \begin{scope}
    \node at (0,0) {\includegraphics[width=.38\linewidth]{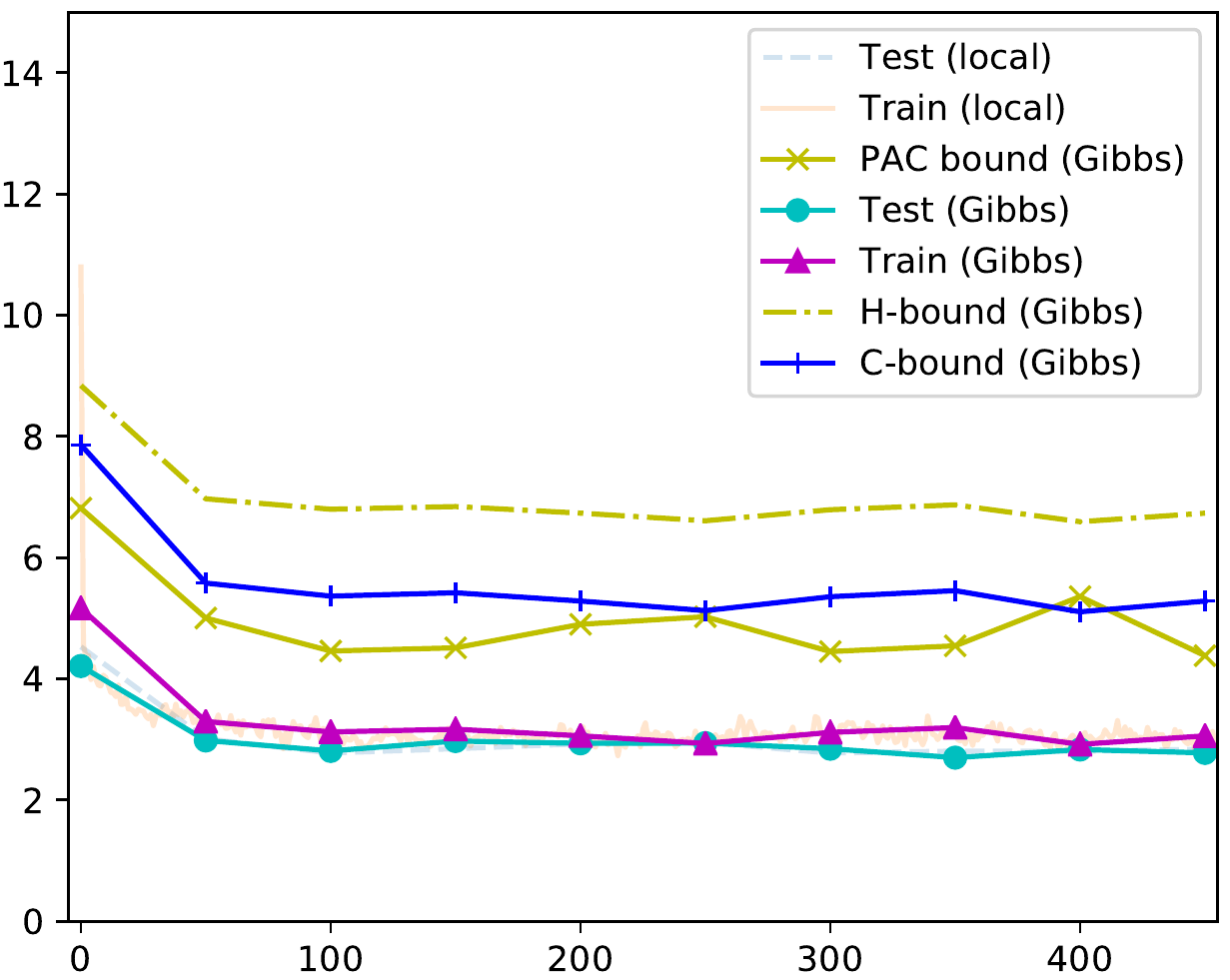}};
    \node[rotate=90] at (-3.5,0) {\small 0--1 error $\times$ 100};
    \node at (0,-2.9) {\small \Epochs};
  \end{scope}
  \begin{scope}[xshift=.47\linewidth]
    \node at (0,0) {\includegraphics[width=.38\linewidth]{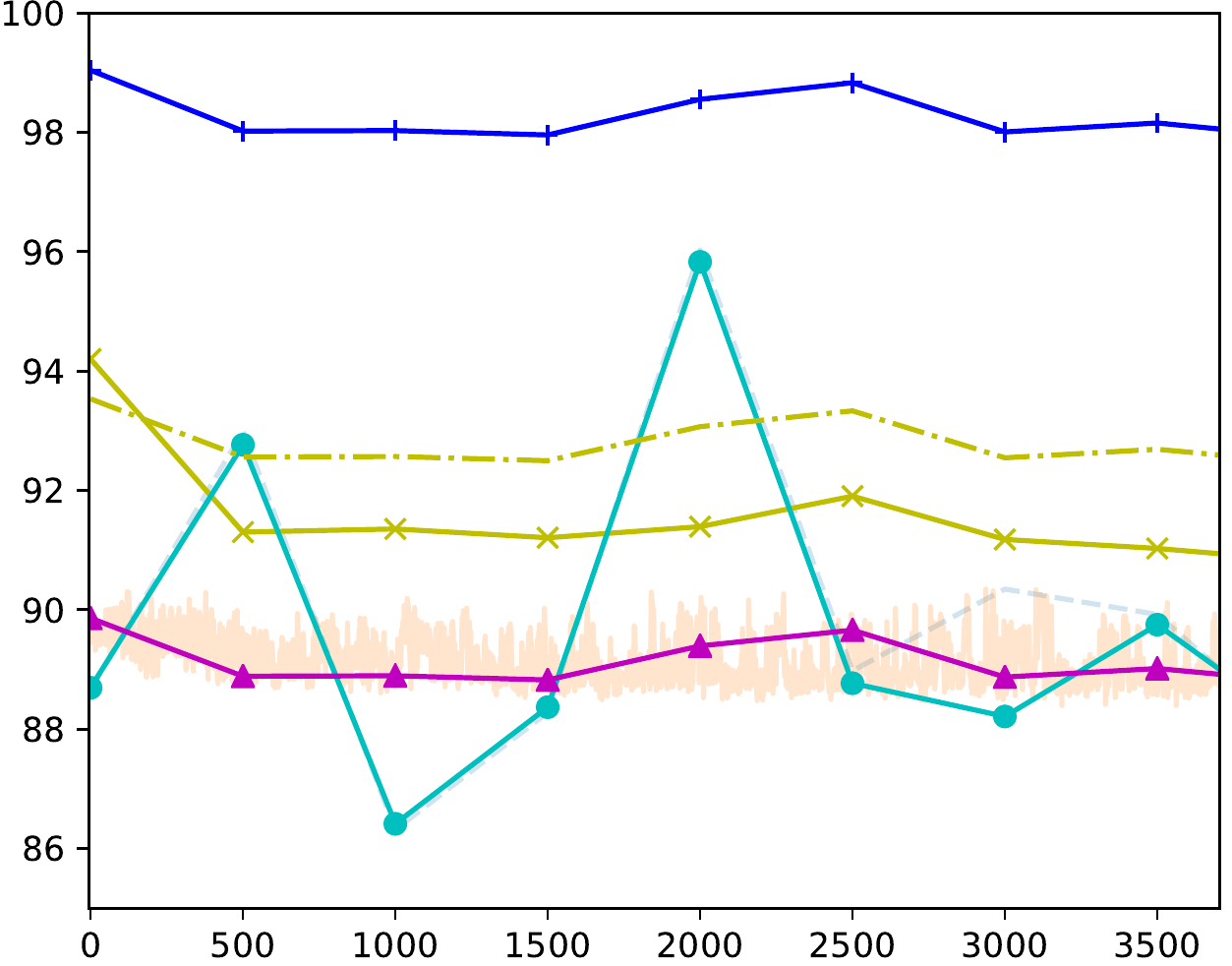}};
    \node at (0,-2.9) {\small \Epochs};
  \end{scope}
  \end{scope}
\end{tikzpicture}
\caption{
(``Local'' here refers to the mean classifier.)
Entropy-SGLD results on MNIST.
{\bf (top-left)} FC1024 network trained on true labels. 
The train and test error suggest that the generalization gap is close to zero, while all three bounds exceed the test error by slightly more than $3\%$.
{\bf (bottom-left)} CONV network trained on true labels. 
Both the train and the test errors are lower than those achieved by the FC1024 network.
We still do not observe overfitting.
The C-bound and PAC-Bayes bounds exceed the test error by $\approx 3\%$.
{\bf (top-right)} FC1024 network trained on random labels. 
After approximately 1000 epochs, we notice overfitting by $\approx 2\%$.
Running Entropy-SGLD further does not cause an additional overfitting. Theory suggests that our choice of $\tau$ prevents overfitting via differential privacy.
{\bf (bottom-right)} CONV network trained on random labels. 
We observe almost no overfitting (less than $1\%$).
Both training and test error coincide and remain close to the guessing rate ($90\%$).
}
\label{multiclass}
\end{figure*}

\section{CIFAR10 experiments} 
\label{cifar10exp}
 
We train a convolutional neural network on CIFAR10 data \citep{CIFAR10data} with true and random labels. The architecture of the network is identical to the one used in \citet{CCSL16}, but the training is performed with no dropout or weight decay.

Most of the experimental details are the same as in MNIST experiments described in \cref{multiclassmnist}. 
In particular, the training objective and the learning rate schedule is identical, with the initial outer loop base learning rate and decay both set to 0.1.  
The results reported in \cref{cifarfigtau,cifarfiggamma} are recorded after training for far more steps than necessary, 
in order to allow SGLD to get closer to its target distribution. 
In more detail, the results are obtained after 100 calls of the outerloop step (i.e., 2000 epochs), 
while we observe that the training error converges very quickly, in most cases within the first 5 calls of the outerloop \ESGLD\ step.
In order to estimate the Gibbs randomized classifier error and KL divergence and evaluate the PAC-Bayes bound, we run SGLD for an extra 50 epochs at the end of training.
 
\subsection{Privacy parameter experiments}
 
We start by fixing $\beta=1$ and experimenting with different values of the parameter $\tau$. 
Recall that the product $\tau \beta$ determines the privacy level.
The value $\tau = 10^{8}$ was used by \citet{CCSL16} in their CIFAR10 experiments.

The top row of \cref{cifarfigtau} presents the results for $\gamma = 0.03$, which is the same value used by \citet{CCSL16}.
The random labels plot highlights a phase transition for $\tau$ values in the range $10^{4}$ to $10^{5}$.
We observe that very little overfitting occurs for smaller values of $\tau$ on both true and random labels.
When $\tau$ exceeds $10^{5}$, the size of generalization gap appears to be data distribution dependent.
For random label dataset with large true Bayes error, the classifier can achieve almost zero classification error, resulting in maximal generalization error.
However, in the case of true label data, we see that the generalization error does not exceed $0.2$. 

Note, that for high values of $\tau$, the differentially private PAC-Bayes bound is completely dominated by the differential privacy penalty.
Effectively, the bound becomes data independent and thus cannot capture this difference in generalization error for true and random labels.

The bottom row contains results for $\gamma = 3$, which corresponds to shrinking the variance of the Gaussian prior and thus decreasing smoothing of the empirical error surface. 
One can recognize the same patterns as in the top row, but now the phase transition happens substantially earlier.
Due to this shift, the DP-PAC-Bayes bound approaches the C-gen bound on the generalization error.

\subsection{Prior variance experiments}

The differential privacy of sampling from the local entropy distribution does not directly depend on $\gamma$. 
However, the optimization problem is clearly affected by the value of $\gamma$, and so the performance achievable within a given privacy budget
is affected by $\gamma$. We fixed the privacy level by taking $\tau\beta= 2000$, and experimented with different $\gamma$ values.
The results are presented in \cref{cifarfiggamma}.

The left plot shows the results for $\beta=1$, which is the same value used in \cref{cifarfigtau} experiments and all MNIST experiments.
For a large range of $\gamma$ values (around 0.03 to 10), we achieve similar DP-PAC-Bayes  bound on the generalization error.
However, $\tau\in[1,3]$ yields the smallest bound on the risk and also the best performing classifiers, as judged by the risk evaluated on the test error.
A value of $\tau<0.01$ corresponds to large prior variance and excessive smoothing, which results in \ESGLD\ finding a poor classifier. 

On the right hand side plot, we reduce $\beta$ to 0.004 to be able to increase $\tau$ and maintain the same level of privacy.
This corresponds to higher SGLD noise on the outerloop step, and smaller noise on the inner SGLD step.
Remember, that the prior variance is $(\gamma \tau)^{-1}$.
Since $\tau$ is now a lot higher, $\gamma <0.001$ results in less smoothing than in the $\beta=1$ case
and we see that \ESGLD\ is now able to find a relatively good classifier while preserving the same level of privacy.
In addition, we see further improvement in the bound on the risk and the test error for a larger range of $\gamma$ values ($\gamma \in [0.0003,20]$).

 \begin{figure*}[ht]
\centering
\begin{tikzpicture}
  \begin{scope}
    \node at (0,2.8) {True Labels};
    \node[rotate=90] at (-3.5,0) {\small 0--1 error $\times$ 100};
    \node at (0,0) {\includegraphics[width=.38\linewidth]{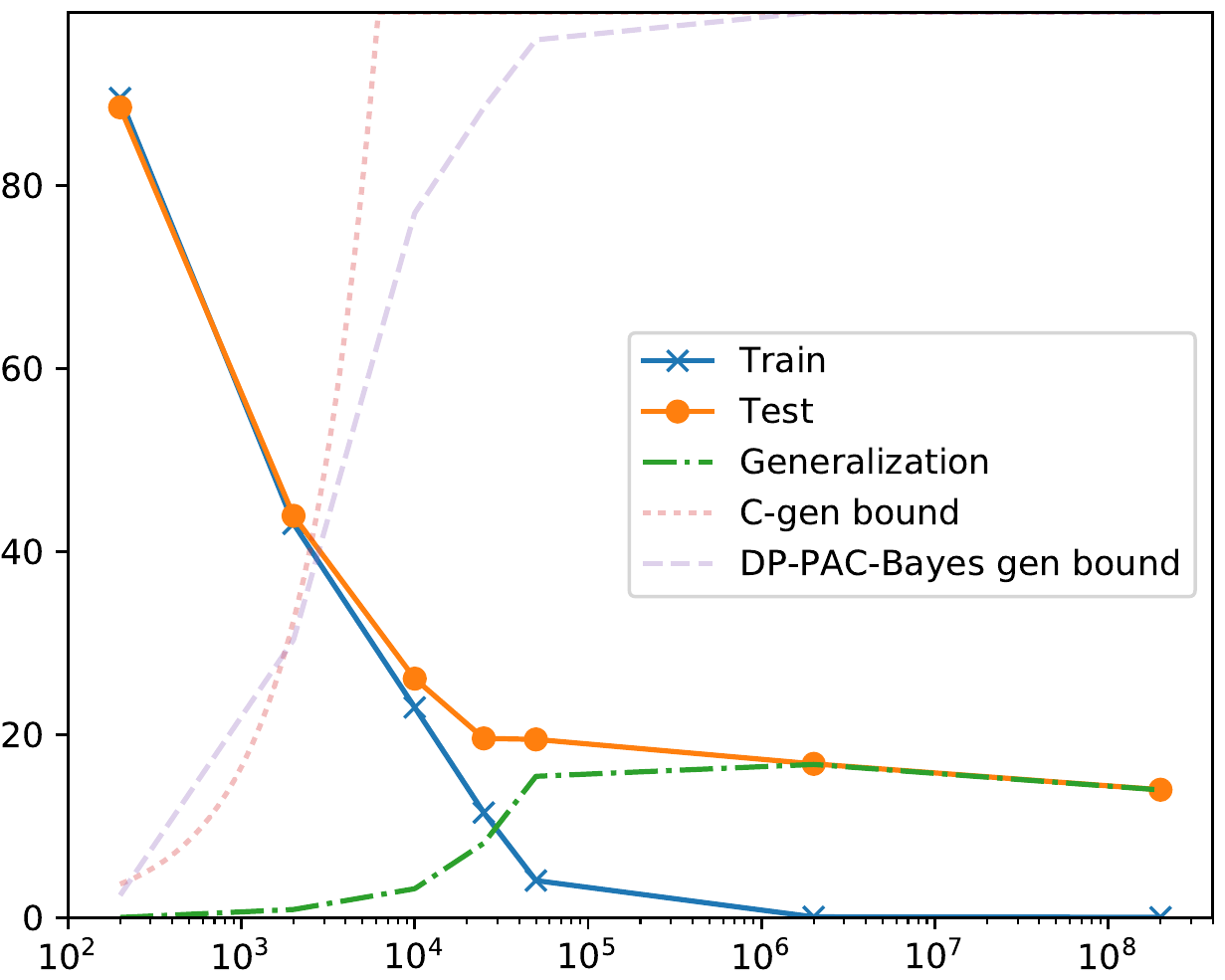}};
  \end{scope}
  \begin{scope}[xshift=.47\linewidth]
    \node at (0,2.8) {Random Labels};
    \node at (0,0) {\includegraphics[width=.38\linewidth]{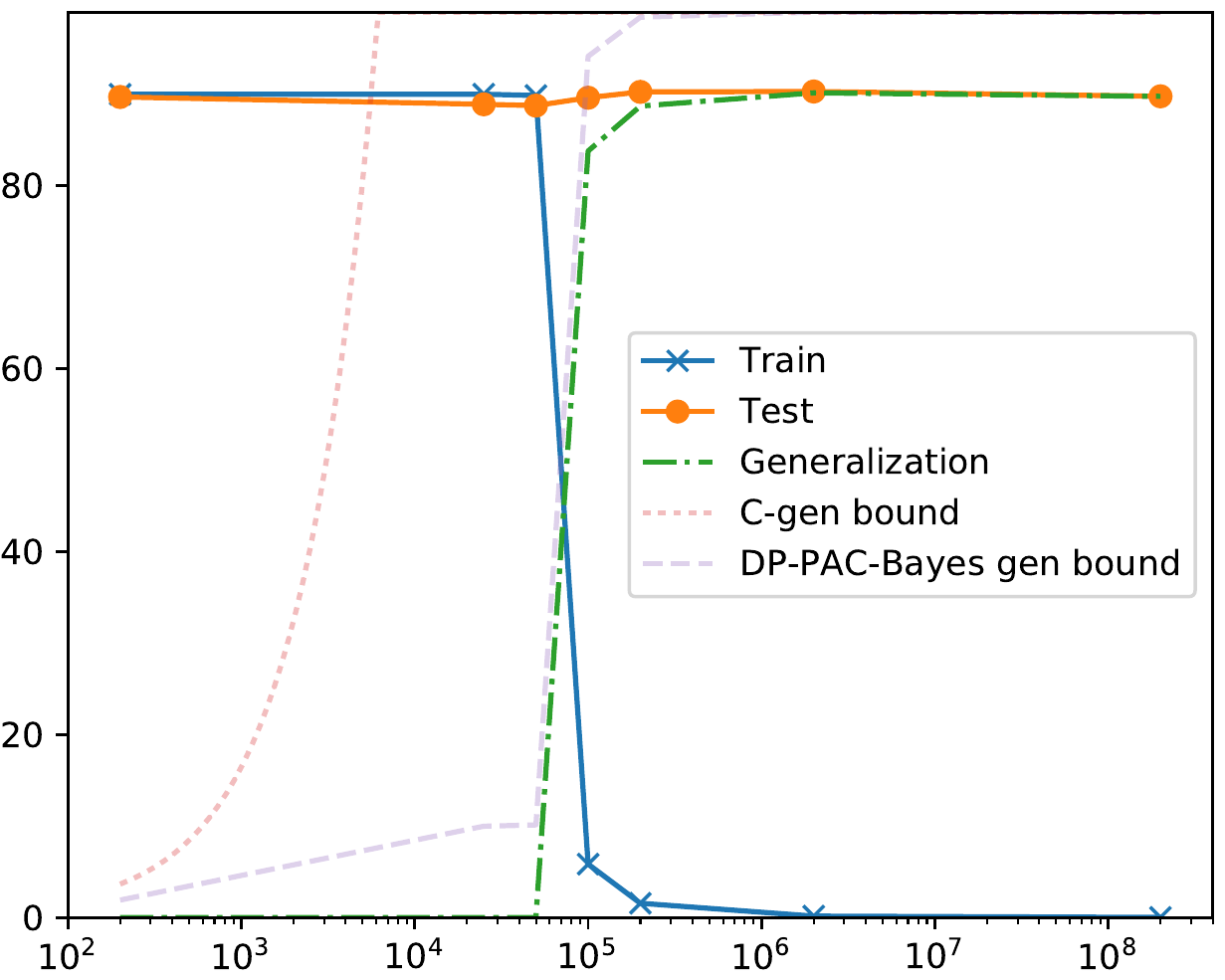}};
    \node[rotate=-90] at (3.5,0) {$\gamma=0.03$};
  \end{scope}
  \begin{scope}[yshift=-5.25cm]
  \begin{scope}
    \node at (0,0) {\includegraphics[width=.38\linewidth]{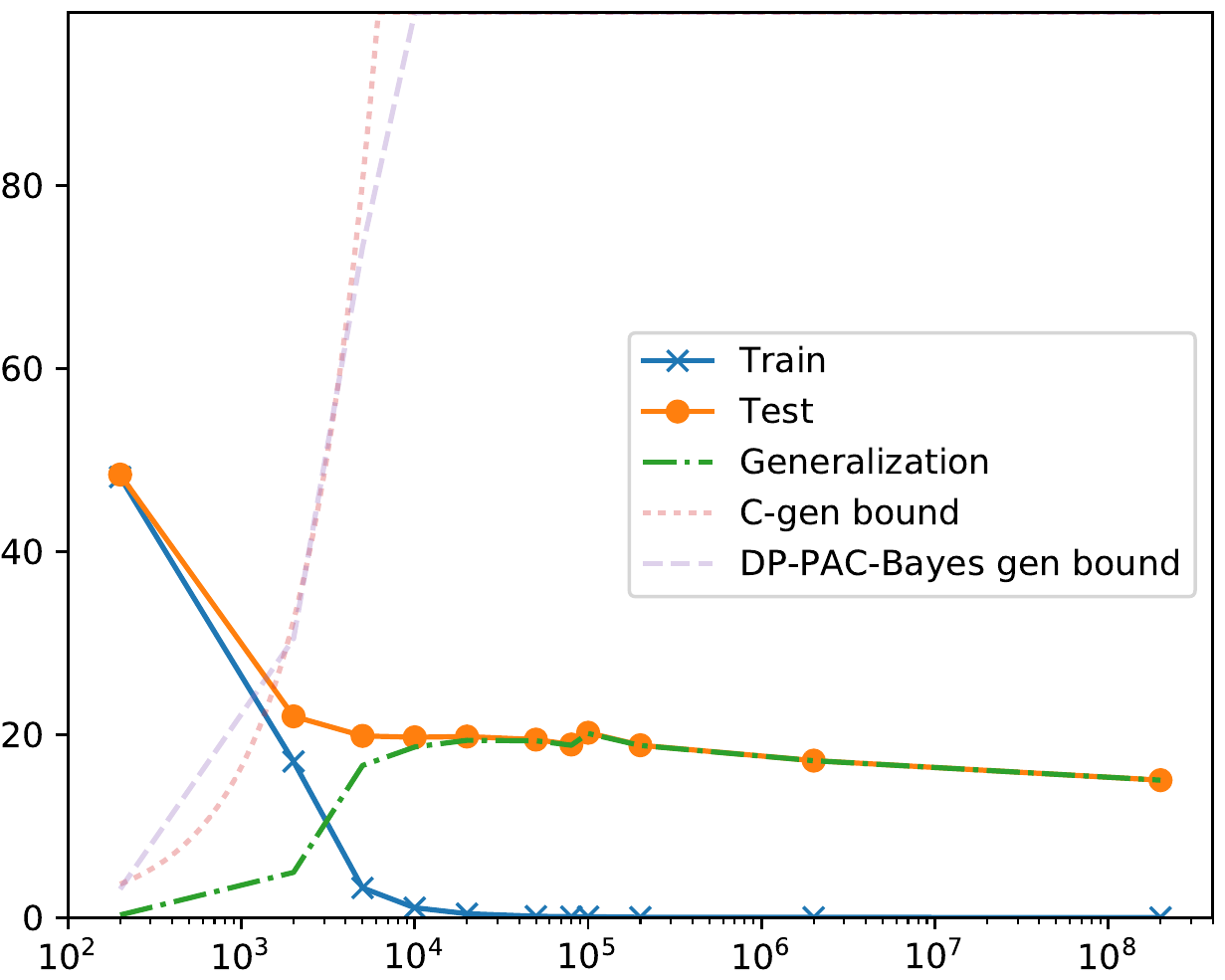}};
    \node[rotate=90] at (-3.5,0) {\small 0--1 error $\times$ 100};
    \node at (0,-2.9) {\small $\tau$};
  \end{scope}
  \begin{scope}[xshift=.47\linewidth]
    \node at (0,0) {\includegraphics[width=.38\linewidth]{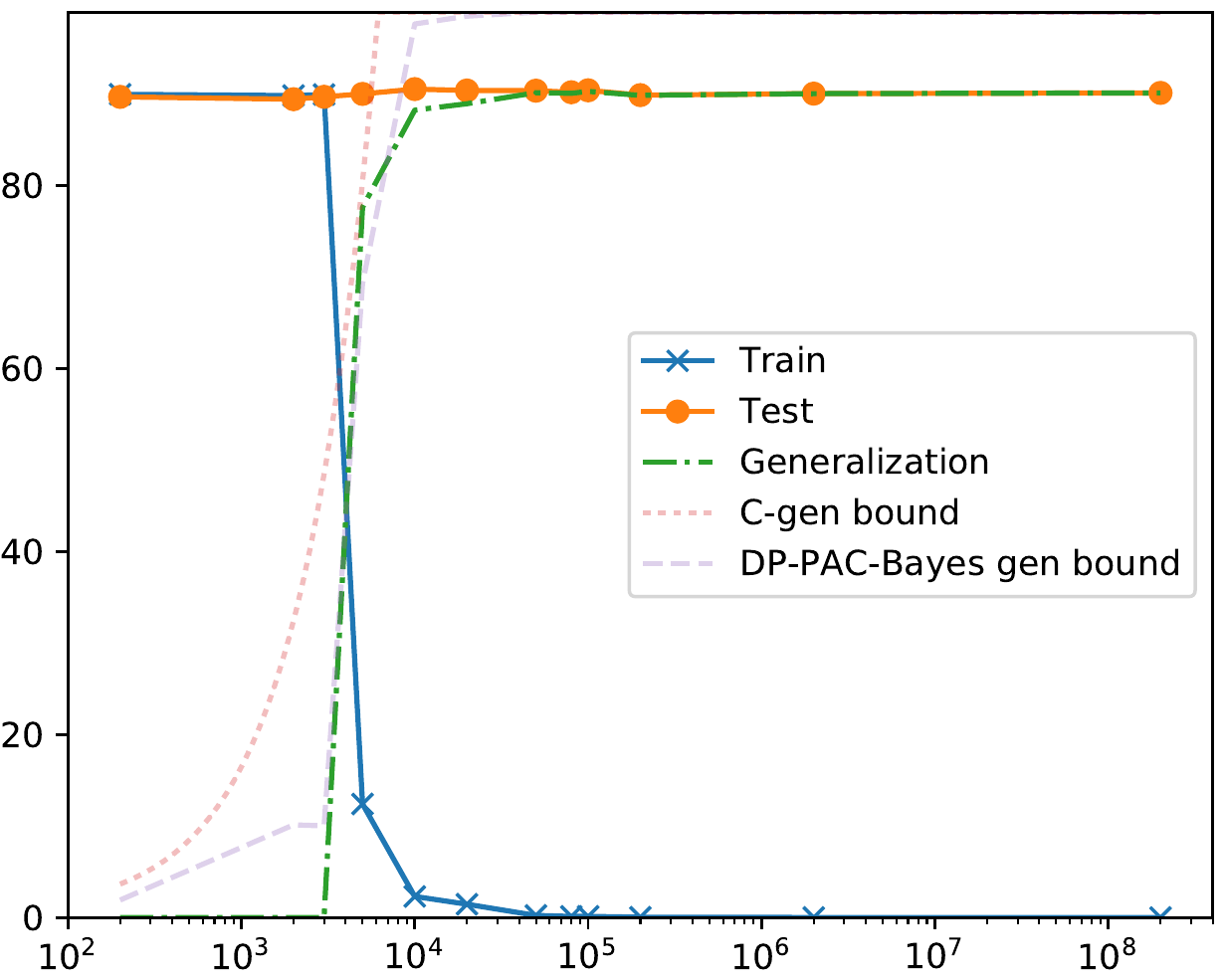}};
    \node at (0,-2.9) {\small $\tau$};
    \node[rotate=-90] at (3.5,0) {$\gamma=3$};
  \end{scope}
  \end{scope}
\end{tikzpicture}
\caption{
Results for \ESGLD\ trained on CIFAR10 data, with $\beta = 1$. 
{\bf (top-left)} 
For this configuration of parameters, \ESGLD\ finds good classifiers (with test error lower than 0.3) only at the values of $\tau$ for which both generalization bounds are vacuous. Note that, as $\tau$ increases, the test error keeps dropping, which cannot be captured by the risk bounds using differential privacy. However, this is only observed for the true label dataset, where the true Bayes error is small.
{\bf (top-right)}
The generalization gap increases with $\tau$ as suggested by the risk bounds, and takes a maximum value for $\tau>10^{6}$.
{\bf (bottom-left)} 
The pattern is similar to the $\gamma=0.03$ case (top left plot). In contrast, \ESGLD\ finds better classifiers (with test error lower than 0.3) for smaller values of $\tau$. The PAC-Bayes and C-bounds are very close to each other.
{\bf (bottom-right)} 
As in the $\gamma=0.03$ case (top right plot), we see maximal overfitting for large values of $\tau$. However, \ESGLD\ starts overfitting at a much lower $\tau$ value ($\tau > 2*10^{2}$) compared to the smaller $\gamma$ case ($\tau > 5*10^{4}$). The PAC-Bayes bound approaches C-bound but no generalization bounds are violated.
}\label{cifarfigtau}
\end{figure*}

\begin{figure*}[ht]
\centering
\begin{tikzpicture}[]
  \begin{scope}[xshift=.1\linewidth]
  \begin{scope}
    \node at (0,2.8) {$\beta = 1, \,\tau = 2000$};
    \node[rotate=90] at (-3.5,0) {\small 0--1 error $\times$ 100};
    \node at (0,-2.9) {\small $\gamma$};
    \node at (0,0) {\includegraphics[width=.38\linewidth]{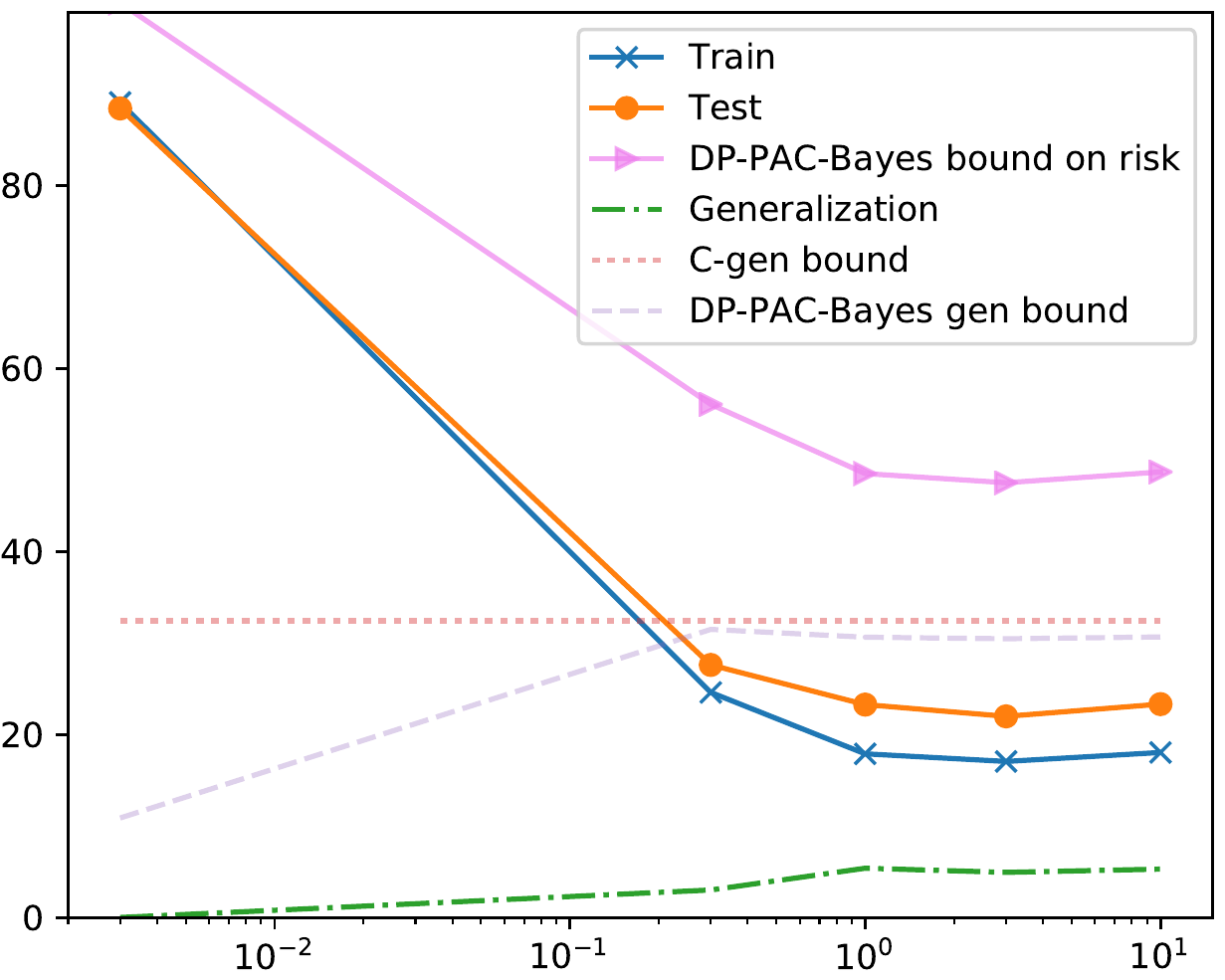}};
  \end{scope}
  \begin{scope}[xshift=.47\linewidth]
    \node at (0,0) {\includegraphics[width=.38\linewidth]{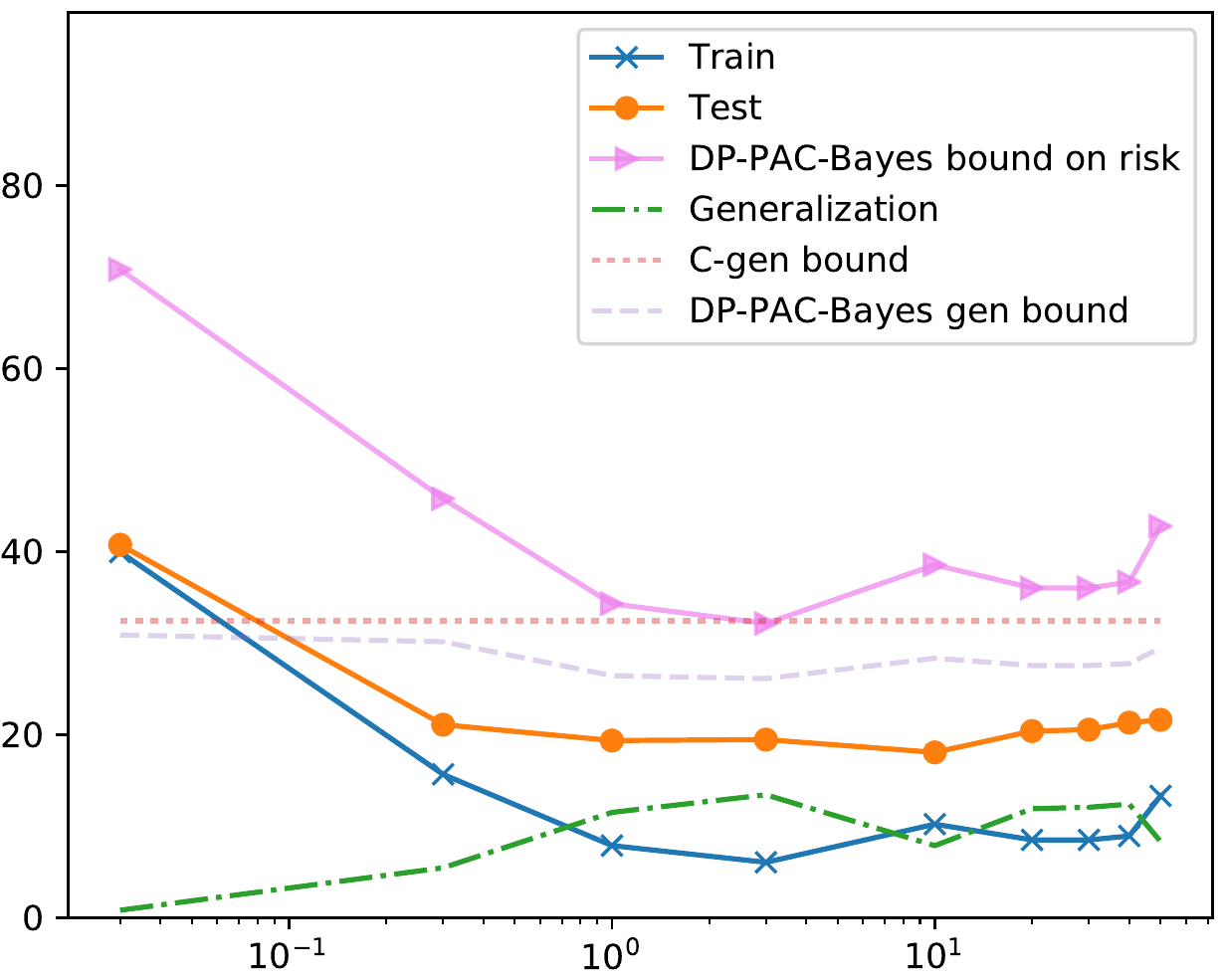}};
    \node at (0,2.8) {$\beta = 4e-3,\, \tau = 5e5$};
    \node[rotate=-90] at (3.5,0) {True Labels};
    \node at (0,-2.9) {\small  $\gamma$};
  \end{scope}
  \end{scope}
\end{tikzpicture}
\caption{
\ESGLD\ experiments on CIFAR10 data with a fixed level of privacy $\beta \tau = 2000$. The generalization bounds we evaluate do not depend on $\gamma$, therefore the C-gen bound takes a constant value. The DP-PAC-Bayes generalization bound corresponds to DP-PAC-Bayes bound on risk minus the empirical error (Train), and is tighter than the C-gen bound for all values of $\beta,\, \tau, \,\gamma$ that we tested.
{\bf (left)} $\beta$ is set to $1$ during optimization. A small value of $\gamma=0.003$ corresponds to high prior variance and thus  substantial smoothing of the optimization surface. In this case, \ESGLD\ fails to find a good classifier. We hypothesize that this happens due to over-smoothing and information loss relating to the location of the actual empirical risk minima.
{\bf (right)} $\beta$ is reduced to $0.004$, which allows us to increase $\tau$ and maintain the same level of privacy. One can notice that we are able to achieve lower empirical risk (train error is $\approx 0.06$ for $\gamma=3$), risk ($\approx 0.19$ on the test set), and thus a lower PAC-Bayes bound on the risk ($\approx 0.32$) compared to the best result for $\beta=1$ setup.
The largest value of $\gamma$ tested is $50$. In this case, the prior variance $(\tau \gamma)^{-1} $ is very small, and the step size is also very small (initial step size is set to base learning rate times the prior variance). This may explain why \ESGLD\ finds slightly worse classifiers compared to smaller values of $\gamma$.
}\label{cifarfiggamma}
\end{figure*}

\section{Related work}
\label{sec:rel}

Entropy-SGD connects to early work by \citet{Hinton93} and \citet{Hochreiter97}.
Both sets of authors introduce regularization schemes based on minimum-description-length principles.
\citeauthor{Hinton93} aim to find weights with less information by finding weights whose empirical risk is insensitive to the addition of Gaussian noise;
 \citeauthor{Hochreiter97} define an algorithm that explicitly seeks out ``flat minima'', 
i.e., a large connected region where the empirical risk surface is nearly constant.
Building on work by \citet{LanCar2002}, these ideas were recently revisited by \citet{DR17}. 
In their work,
a PAC-Bayes bound is minimized with respect to the posterior using nonconvex optimization.
(\citet{AchilleS17} arrive at a similar objective from the information bottleneck perspective.)
In contrast, our work shows that Entropy-SGD implicitly uses the optimal (Gibbs) posterior, 
and then optimizes the resulting PAC-Bayes bound with respect to the prior.

Our work also relates to renewed interest in nonvacuous generalization bounds \citep{LangfordPHD,LanCar2002},
i.e., bounds on the numerical difference between the unknown classification error and the training error that are (much) tighter than the tautological upper bound of one. 
In the same work by \citet{DR17} discussed above, 
nonvacuous generalization bounds are demonstrated for MNIST.
(Their algorithm can be viewed as variational dropout \citep{kingma15} or information dropout \citep{achille2018information}, 
with a proper data-independent prior but without local reparametrization.)
Their work builds on core insights by \citet{LanCar2002}, 
who computed nonvacuous bounds for neural networks five orders of magnitude smaller.
Our work shows that Entropy-SGLD yields generalization bounds, though the value of these bounds depends on
the degree to which SGLD is allowed to converge. Under the optimistic assumption that convergence has been achieved, 
we see that the resulting bounds are tighter than those computed by \citeauthor{DR17}.

Our analysis of \ESGLD\ exploits results in differential privacy \citep{Dwork2006}
and its connection to generalization \citep{dwork2015preserving,dwork2015generalization,bassily2016algorithmic,Oneto2017}.
\ESGLD\ is related to differentially private empirical risk minimization, 
which is well studied, both in the abstract
\citep{chaudhuri2011differentially, kifer2012private,bassily2014differentially}
and in the particular setting of private training via SGD \citep{bassily2014differentially, Abadi:2016}.
Given the connection between Gibbs posteriors and Bayesian posteriors,
\ESGLD\ also relates to the differential privacy of Bayesian and approximate sampling algorithms
\citep{mir2013differential, bassily2014differentially, dimitrakakis2014robust, Wang:2015, Minami16}.

\citet{London2017} introduces PAC-bayes bounds where the prior and posterior are not defined on a hypothesis space, but rather on the randomness source used by a (randomized) learning algorithm, such as SGD. Like the bounds we use, London's bounds depend on the stability of a learning algorithm as well as the relative entropy from posterior to prior. The priors in London's setting, however, are chosen before seeing the data, as usual, but unlike in our analysis of Entropy-SGLD, where we investigate private data-dependent priors. We believe these uses of privacy/stability are complementary.

Finally, the local entropy should not be confused with the smoothed risk surface
\[\textstyle
\ww \mapsto \int f(\ww+x) \Normal\!(\dee x),
\]
i.e., obtained by convolution with a Gaussian kernel: in that case, every point on this surface represents the risk of a randomized classifier, obtained by perturbing the parameters according to a Gaussian distribution. (This type of smoothing relates to the approach of \citet{DR17}.)
The local entropy 
also relates to a Gaussian perturbation, 
but the perturbation is either accepted or rejected based upon its relative performance (as measured by the exponentiated loss) compared with typical perturbations.
Thus the local entropy perturbation concentrates on regions of weight space with low empirical risk,
provided they have sufficient probability mass under the distribution of the random perturbation.

\end{document}